\documentclass{article}

\PassOptionsToPackage{numbers, compress}{natbib}



\usepackage[final]{neurips_2021}


\usepackage[numbers]{natbib}
\usepackage[hang,flushmargin]{footmisc}
\usepackage[inline]{enumitem}
\usepackage[utf8]{inputenc} 
\usepackage[T1]{fontenc}    
\usepackage{hyperref}       
\usepackage{url}            
\usepackage{booktabs}       
\usepackage{amsmath,amsfonts,bm}
\usepackage{amssymb}        
\usepackage{nicefrac}       
\usepackage{microtype}      
\usepackage[dvipsnames,table]{xcolor}
\usepackage{graphicx, wrapfig}
\usepackage{epstopdf}
\usepackage[scriptsize,hang]{subfigure}
\usepackage{extarrows}
\usepackage{mathtools,thmtools}
\usepackage{xfrac}
\usepackage{soul}
\usepackage{multirow}
\usepackage{tabularx}
\usepackage{bmpsize}
\usepackage{algorithm}
\usepackage{algorithmic}

\usepackage{times}
\usepackage{epsfig}
\usepackage{array}
\usepackage{collcell}
\usepackage{fp}
\usepackage{xstring}
\usepackage{float}

\usepackage{relsize}
\usepackage{enumitem}
\usepackage{multirow}
\usepackage{array}
\usepackage{subfigure}
\usepackage{color}
\usepackage{xfrac}
\usepackage{amsthm}
\usepackage{thmtools,thm-restate}

\usepackage{tikz}

\newcommand{\Arrow}[1]{%
\parbox{#1}{\tikz{\draw[->](0,0)--(#1,0);}}
}
\newcommand{\shortarrow}{\Arrow{1.5mm}}
  
\newcommand{\T}{\mathcal{T}}

\newcommand{\chris}[1]{}
\newcommand{\ehsan}[1]{}
\newcommand{\rohan}[1]{}
\newcommand{\zhe}[1]{}
\newcommand{\kevin}[1]{}
\newcommand{\chelsea}[1]{}

\newcommand{\pref}[1]{Eq.~(\ref{#1})}
\newcommand{\affinity}[2]{
  \ensuremath{\mathcal{Z}^t_{#1 \shortarrow #2}}}
\newcommand{\avgaffinity}[2]{
  \ensuremath{\hat{\mathcal{Z}}_{#1 \shortarrow #2}}}

\definecolor{myred}{HTML}{e53935}
\definecolor{myblue}{HTML}{0277bd}

\definecolor{modelblue}{HTML}{1034A6}
\definecolor{urlorange}{HTML}{1034A6}

\newcommand{\total}{\text{\tiny total}}

\usepackage[export]{adjustbox}

\definecolor{LightRed}{rgb}{.99,.5,.5}
\definecolor{LightGreen}{rgb}{.5,.99,.5}
\newcommand{\checkvalue}[1]{
  \IfDecimal{#1}{
  \ifdim #1pt<0pt
    \FPset\inp{#1}
    \FPeval\oup{min(100,max(0,100*((10+\inp)/10)))}
    \xdef\oup{\oup}
    \cellcolor{white!\oup!LightRed}
    {{\hspace{-5pt}#1\%}}
  \else
    \FPset\inp{#1}
    \FPeval\oup{min(100,max(0,100*((1+\inp)/10)))}
    \xdef\oup{\oup}
    \cellcolor{LightGreen!\oup!white}
    {{\hspace{-5pt}#1\%}}
  \fi
  }
  {#1}
}

\newcommand{\checkfloat}[1]{
  \IfDecimal{#1}{
  \ifdim #1pt<0.65pt
    \FPset\inp{#1}
    \FPeval\oup{min(100,max(0,100*((\inp-.089)/.65)))}
    \xdef\oup{\oup}
    \cellcolor{white!\oup!LightRed}
    {{\hspace{-5pt}{#1}\%}}
  \else
    \FPset\inp{#1}
    \FPeval\oup{min(100,max(0,100*((\inp-0.65)/(1.915-0.65))))}
    \xdef\oup{\oup}
    \cellcolor{LightGreen!\oup!white} 
    {{\hspace{-5pt}{#1}\%}}
  \fi
  }
  {#1}
}

\newcolumntype{C}{>{\collectcell\checkvalue}c<{\endcollectcell}}
\newcolumntype{F}{>{\collectcell\checkfloat}c<{\endcollectcell}}

\newcommand{\emethod}{{\color{modelpink}TAG}}
\newcommand{\method}{{\color{modelpink}TAG }}
\definecolor{modelpink}{HTML}{ec058c}




\def\@fnsymbol#1{\ensuremath{\ifcase#1\or *\or \dagger\or \ddagger\or
   \mathsection\or \mathparagraph\or \|\or **\or \dagger\dagger
   \or \ddagger\ddagger \else\@ctrerr\fi}}
\newcommand{\ssymbol}[1]{^{\@fnsymbol{#1}}}


\def\Tableref#1{Table~\ref{#1}}


\def\Figref#1{Figure~\ref{#1}}


\def\Secref#1{Section~\ref{#1}}


\def\eqref#1{equation~\ref{#1}}









\def\1{\bm{1}}










\DeclareMathAlphabet{\mathsfit}{\encodingdefault}{\sfdefault}{m}{sl}
\SetMathAlphabet{\mathsfit}{bold}{\encodingdefault}{\sfdefault}{bx}{n}











\newcommand{\E}{\mathbb{E}}



\graphicspath{{figures/}}

\title{Efficiently Identifying Task Groupings for \\ Multi-Task Learning}

%

\author{%
 Christopher Fifty$^1$, Ehsan Amid$^1$, Zhe Zhao$^1$, Tianhe Yu$^{1,2}$,\\
 \textbf{Rohan Anil}$^1$, \textbf{Chelsea Finn}$^{1,2}$\\
 Google Research, Brain Team$^1$, Stanford University$^2$\\
 \texttt{cfifty@google.com} \\
 }

\begin{document}

\maketitle

\begin{abstract}
Multi-task learning can leverage information learned by one task to benefit the training of other tasks. Despite this capacity, na\"ively training all tasks together in one model often degrades performance, and exhaustively searching through combinations of task groupings can be prohibitively expensive. As a result, efficiently identifying the tasks that would benefit from training together remains a challenging design question without a clear solution. In this paper, we suggest an approach to select which tasks should train together in multi-task learning models. Our method determines task groupings in a single run by training all tasks together and quantifying the effect to which one task's gradient would affect another task's loss. On the large-scale Taskonomy computer vision dataset, we find this method can decrease test loss by 10.0\% compared to simply training all tasks together while operating 11.6 times faster than a state-of-the-art task grouping method. 
\end{abstract}

\section{Introduction}
Many of the forefront challenges in applied machine learning demand that a single model performs well on multiple tasks, or optimizes multiple objectives while simultaneously adhering to unmovable inference-time constraints. For instance, autonomous vehicles necessitate low inference time latency to make multiple predictions on a real-time video feed to precipitate a driving action \citep{karpathy_mtl}. Robotic arms are asked to concurrently learn how to pick, place, cover, align, and rearrange various objects to improve learning efficiency \citep{mt-opt}, and online movie recommendation systems model multiple engagement metrics to facilitate low-latency personalized recommendations \citep{youtube}. Each of the above applications depends on multi-task learning, and advances which improve multi-task learning performance have the potential to make an outsized impact on these and many other domains.

Multi-task learning can improve modeling performance by introducing an inductive bias to prefer hypothesis classes that explain multiple objectives and by focusing attention on relevant features~\citep{ruder_overview}. 
However, it may also lead to severely degraded performance when tasks compete for model capacity or are unable to build a shared representation that can generalize to all objectives. Accordingly, finding groups of tasks that derive benefit from the positives of training together while mitigating the negatives often improves the modeling performance of multi-task learning systems. 

While recent work has developed new multi-task learning optimization schemes~\cite{uncertainty, gradnorm, moo, pcgrad, graddrop, gradvaccine}, the problem of deciding which tasks should be trained together in the first place is an understudied and complex issue that is often left to human experts~\cite{zhang2017survey}. However, a human's understanding of similarity is motivated by their intuition and experience rather than a prescient knowledge of the underlying structures learned by a neural network. To further complicate matters, the benefit or detriment induced from multi-task learning relies on many non-trivial decisions including, but not limited to, dataset characteristics, model architecture, hyperparameters, capacity, and convergence~\citep{wu2020understanding, branched, standley2019tasks, sun2019adashare}. As a result, a systematic technique to determine which tasks should train together in a multi-task neural network would be valuable to practitioners and researchers alike~\citep{baxter2000, ben2003exploiting}.

One approach to select task groupings is to exhaustively search over the $2^{|\mathcal{T}|} - 1$ multi-task networks\footnote{There are $\binom{|\mathcal{T}|}{1}$ single-task networks, $\binom{|\mathcal{T}|}{2}$ 2-task multi-task networks, ... which yields $\sum_{i=1}^{|\mathcal{T}|} \binom{|\mathcal{T}|}{i} = 2^{|\mathcal{T}|} - 1$.} for a set of tasks $\mathcal{T}$. However, the cost associated with this search can be prohibitive, especially when there is a large number of tasks. It is further complicated by the fact that the set of tasks to which a model is applied may change throughout its lifetime. As tasks are added to or dropped from the set of all tasks, this costly analysis would need to be repeated to determine new groupings. Moreover, as model scale and complexity continues to increase, even approximate task grouping algorithms which evaluate only a subset of combinations may become prohibitively costly and time-consuming to evaluate. 

\begin{figure}[t!] 
  \vspace{-0.2cm}
    \centering
    \includegraphics[width=\textwidth]{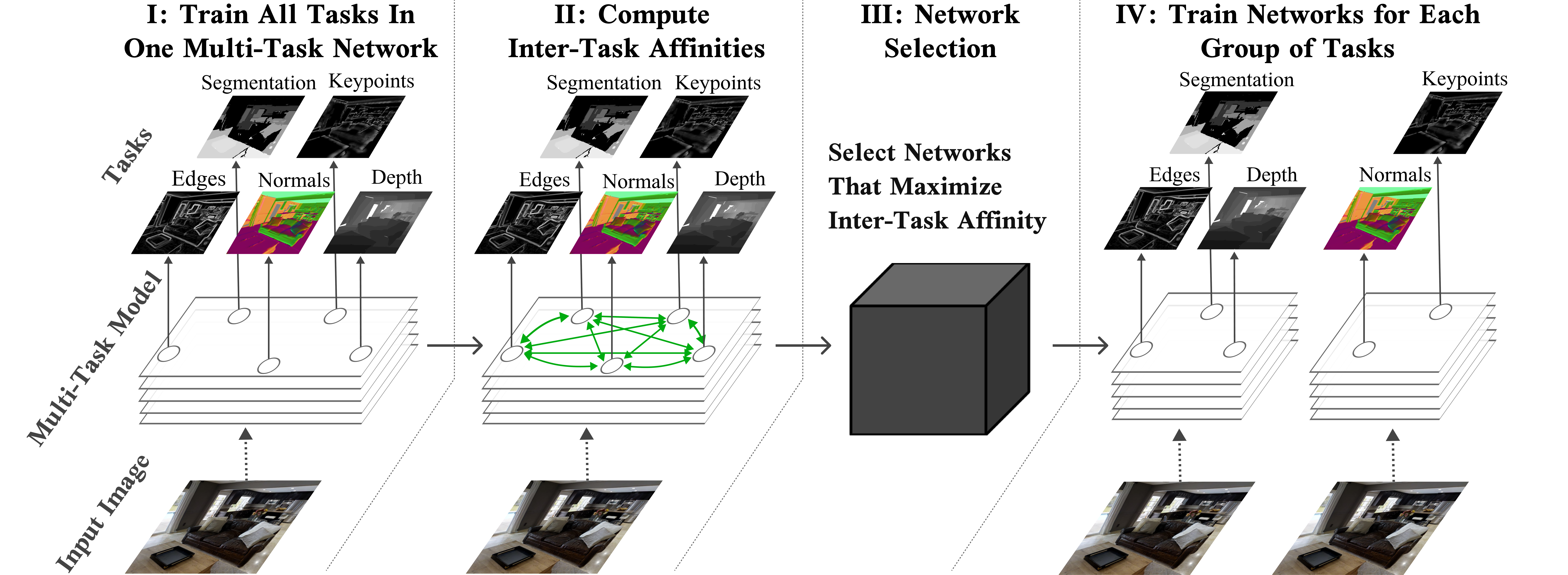}
    \caption{\footnotesize{Overview of our suggested approach to efficiently determine task groupings. (I): Train all tasks together in a multi-task learning model. (II): Compute inter-task affinity scores during training. (III): Select multi-task networks that maximize the inter-task affinity score onto each serving-time task. (IV): Train the resulting networks and deploy to inference.}}
    \label{fig:method}
\vspace{-0.4cm}
\end{figure}

In this paper, we aim to develop an efficient framework to select task groupings without sacrificing performance. 
We propose to measure inter-task affinity by training all tasks together in a single multi-task network and quantifying the effect to which one task's gradient update would affect another task's loss. 
This per-step quantity is averaged across training, and tasks are then grouped together to maximize the affinity onto each task. 
A visual depiction of the method is shown in \Figref{fig:method}. 
Our suggested approach makes no assumptions regarding model architecture and is applicable to any paradigm in which shared parameters are updated with respect to multiple losses.

In summary, our primary contribution is to suggest a measure of inter-task affinity that can be used to systematically and efficiently determine task groupings for multi-task learning. 
Our theoretical analysis shows that grouping tasks by maximizing inter-task affinity will outperform any other task grouping in the convex setting under mild conditions.
Further on two challenging multi-task image benchmarks, our empirical analysis finds this approach outperforms training all tasks independently, training all tasks together (with and without training augmentations), and is competitive with a state-of-the-art task grouping method while decreasing runtime by more than an order of magnitude. 

\section{Related Work}
\textbf{Task Groupings.} Prevailing wisdom suggests tasks which are similar or share a similar underlying structure may benefit from training together in a multi-task system~\citep{caruana1993multitask, caruana1998multitask, argyriou2008convex}. Early work in this domain pertaining to the convex setting assume all tasks share a common latent feature representation, and find that model performance can be significantly improved by clustering tasks based on the basis vectors they share in this latent space~\citep{kang2011learning, kumar2012learning}. However, early convex methods to determine task groupings often make prohibitive assumptions that do not scale to deep neural networks.

Deciding which tasks should train together in multi-task neural networks has traditionally been addressed with costly cross-validation techniques or high variance human intuition. An altogether different approach may leverage recent advances in transfer learning focused on understanding task relationships~\citep{taskonomy, achille2019information,dwivedi2019representation,zhuang2020comprehensive, achille2019task2vec}; however, \citep{standley2019tasks} show transfer learning algorithms which determine task similarity do not carry over to the multi-task learning domain and instead propose a multi-task specific framework which trains between $\binom{|\mathcal{T}|}{2} + |\mathcal{T}|$ and $2^{|\mathcal{T}|} - 1$ models to approximate exhaustive search performance. Our approach differs from \citep{standley2019tasks} in that it computes task groupings from only a single training run.

\textbf{Architectures and Training Dynamics.} A plethora of multi-task methods addressing what parameters to share among tasks in a model have been developed, such as Neural Architecture Search~\citep{guo2020learning, sun2019adashare, branched, liu2018progressive, Huang_2018, Lu_2017}, Soft-Parameter Sharing~\citep{cross_stitch, duong2015low, yang2016trace}, and asymmetric information transfer~\citep{lee2016asymmetric, rusu2016progressive, lee2018asymmetric} to improve multi-task performance. Although this direction is promising, we direct our focus towards \textit{when to share} tasks in a multi-task network rather than architecture modifications to maximize the benefits from training a fixed set of tasks together. Nevertheless, both approaches are complementary, and architecture augmentations seem to perform best when trained with related tasks~\citep{ruder_overview}.

Significant effort has also been invested to improve the optimization dynamics of MTL systems. In particular, dynamic loss reweighing has achieved performance superior to using fixed loss weights found with extensive hyperparameter search~\citep{uncertainty, guo2018dynamic, liu2019end, gradnorm, moo, pareto_moo}. Another set of methods seek to mitigate inter-task conflict by manipulating the direction of task gradients rather than simply their magnitude~\citep{suteu2019regularizing, modulation, pcgrad, graddrop, gradvaccine, imtl}. In our experiments, we compare against Uncertainty Weights~\citep{uncertainty}, GradNorm~\citep{gradnorm}, and PCGrad~\citep{pcgrad} to contextualize the relative change in performance from splitting tasks into groups. 
We find that task grouping methods outperform all three training augmentations; nonetheless, they are naturally complementary. In \Secref{sec:expts}, our results indicate enhancing the networks found by our method with PCGrad can lead to additional improvements in performance. 

\textbf{Looking into the Future.}
``Lookahead'' methods in deep learning can often be characterized by saving the current state of the model, applying one or more gradient updates to a subset of the parameters, reloading the saved state, and then leveraging the information learned from the future state to modify the current set of parameters. This approach has been used extensively in the meta-learning~\citep{maml, reptile, chameleon, grant2018recasting, kim2018bayesian}, optimization~\citep{nesterov27method, hinton1987using, zhang2019lookahead, swa, johnson2013accelerating}, and recently auxiliary task learning domains~\citep{adaptive}. 
Unlike the above mentioned methods which look into the future to modify optimization processes, our work adapts this central concept to the multi-task learning domain to characterize task interactions and assign tasks to groups of networks.

\section{Task Grouping Problem Definition}
\label{sec:problem_definition}
We draw a distinction between inference-time latency constraints and inference-time memory budget. The former characterizes the speed at which predictions can be computed, with similarly sized models running in parallel having latency roughly equivalent to a single model running by itself. The latter relates to the number of parameters used by all models during inference, with a $n$-times parameter model having a similar budget to an $n$-group of normal-sized models. We configure our analysis to span both dimensions, but also provide analysis into only the latter in the Appendix.

Given a set of tasks $\mathcal{T}$, a fixed inference-time memory budget $b$, and latency constraint $c$, our aim is to assign tasks to networks such that average task performance is maximized. Additionally, each network must have a parameter count less than $c$; the networks must span our set of tasks $\mathcal{T}$; and the total number of networks is less than or equal to our memory budget $b$. Moreover, tasks can be trained in a network without being served from this network during inference. In this case, they are used to assist the learning of the other tasks, and during inference, are served from one of the other multi-task networks. Formulating our task grouping framework in this manner aligns our work with industry trends where inference-time parameter budgets are limited and latency unmovable.  

More formally, for a set of $n$ tasks $\mathcal{T} = \{\tau_1, \tau_2, .., \tau_n\}$, we would like to construct a group of $k$ multi-task neural networks $M = \{m_1, m_2, ..., m_k\}$ such that $\forall \tau_i \in \mathcal{T}$, $\exists$ \textbf{exactly one} multi-task network $m_j \in M$, parameter count of $m_j < c$, such that $m_j$ makes an inference-time prediction for $t_i$ subject to $k \leq b$ where $b$ is our memory budget.
$m_j$ is the $j^{th}$ multi-task network which takes input $\mathcal{X}$, and concurrently trains a set of tasks $\{\tau_a, \tau_c, ..., \tau_f\}$, but only serves a subset of those tasks at inference. 
For a given performance measure $\mathcal{P}$, we can then define the aggregate performance of our task grouping as $\sum_{i=1}^n \mathcal{P}(\tau_i | M)$ where $\mathcal{P}(\tau_i | M)$ computes the performance of task $\tau_i$ from the set of models $M$ using the model $m_j$ which the task grouping algorithm predicts the performance of $\tau_i$ will be highest.

\section{Grouping Tasks by Measuring Inter-Task Affinity}
We propose a method to group tasks by examining the effect to which one task's gradient would increase or decrease another task's loss. We formally define the method in \Secref{sec:algorithm}, describe a systematic procedure to go from inter-task affinity scores to a grouping of tasks in \Secref{sec:network_selection}, and provide theoretical analysis in \Secref{sec:theory}. 

\subsection{Inter-Task Affinity}
\label{sec:algorithm}
Within the context of a hard-parameter sharing paradigm, tasks collaborate to build a shared feature representation which is then specialized by individual task-specific heads to output a prediction. Specifically, through the process of successive gradient updates to the shared parameters, tasks implicitly transfer information to each other. As a consequence, we propose to view the extent to which a task's successive gradient updates on the shared parameters affect the objective of other tasks in the network as a proxy measurement of inter-task affinity.

Consider a multitask loss function parameterized by $\{\theta_s\}\cup \{\theta_i\vert\, i \in \T\}$ where $\theta_s$ represents the shared parameters and $\theta_i$ represents the task $i\in \T$ specific parameters.
Given a batch of examples $\mathcal{X}$, let

\vspace{-0.9cm}
\begin{align*}
    L_{\total}(\mathcal{X}, \theta_{s}, \{\theta_{i}\}) = \sum_{i \in \T} L_{i}(\mathcal{X}, \theta_{s}, \theta_{i})\, ,
\end{align*}
denote the total loss where $L_i$ represents the non-negative loss of task $i$. For simplicity of notation, we set the loss weight of each task to be equal to 1, though our construction generalizes to arbitrary weightings.

For a given training batch $\mathcal{X}^t$ at time-step $t$, define the quantity $\theta_{s\vert i}^{t+1}$ to represent the updated shared parameters after a gradient step with respect to the task $i$. Assuming stochastic gradient descent for simplicity, we have
\vspace{-0.4cm}
\begin{align*}
    \theta_{s\vert i}^{t+1} \coloneqq \theta_{s}^{t} - \eta \nabla_{\theta_s^t}L_i(\mathcal{X}^t, \theta_s^t, \theta_i^t)\, .
\end{align*}
We can now calculate a \textit{lookahead} loss for each task by using the updated shared parameters while keeping the task-specific parameters as well as the input batch unchanged.
That is, in order to assess the effect of the gradient update of task $i$ on a given task $j$, we can compare the loss of task $j$ before and after applying the gradient update from task $i$ onto the shared parameters. 
To eliminate the scale discrepancy among different task losses, we consider the ratio of a task's loss before and after the gradient step on the shared parameters as a scale invariant measure of relative progress. We can then define an asymmetric measure for calculating the \emph{affinity} of task $i$ at a given time-step $t$ on task $j$ as 
\begin{align}
    \label{eq:Z-alpha}
    \mathcal{Z}^t_{i \shortarrow j} = 1 - \frac{L_{j}(\mathcal{X}^t, \theta_{s\vert i}^{t+1}, \theta_{j}^{t})}{L_{j}(\mathcal{X}^t, \theta_s^t, \theta_j^t)}\, .
\end{align}
Notice that a positive value of $\affinity{i}{j}$ indicates that the update on the shared parameters results in a lower loss on task $j$ than the original parameter values, while a negative value of $\mathcal{Z}^t_{i \shortarrow j}$ indicates that the shared parameter update is antagonistic for this task's performance. Our suggested measure of inter-task affinity is computed at a per-step level of granularity, but can be averaged across all steps, every $n$ steps, or a contiguous subset of steps to derive a ``training-level'' score:
\begin{equation*}
    \avgaffinity{i}{j} = \frac{1}{T} \sum_{t=1}^{T} \mathcal{Z}^t_{i \shortarrow j}\, .
\end{equation*}
In \Secref{sec:expts}, we find $\avgaffinity{i}{j}$ is empirically effective in selecting high performance task groupings and provide an ablation study along this dimension in \Secref{sec:ablations}. 

\subsection{Network Selection Algorithm}
\label{sec:network_selection}
At a high level, our proposed approach trains all tasks together in one model, measures the pairwise task affinities throughout training, identifies task groupings that maximize total inter-task affinity, and then trains the resulting groupings for evaluation on the test set. We denote this framework as {\color{modelpink} \mbox{Task Affinity Grouping (TAG)}} and now describe the algorithm to convert inter-task affinity scores into a set of multi-task networks. More formally, given $\binom{|\mathcal{T}|}{2}$ values representing the pairwise inter-task affinity scores collected during a single training run, our network selection algorithm should produce $k$ multi-task networks, $k \leq b$ where $b$ is the inference-time memory budget, with the added constraint that every task must be served from exactly one network at inference.   

For a group composed of a pair of tasks $\{a,b\}$, the affinity score onto task $a$ would simply be $\avgaffinity{b}{a}$ and the affinity score onto task $b$ would be $\avgaffinity{a}{b}$. For task groupings consisting of three or more tasks, we approximate the inter-task affinity onto a given task by averaging the pairwise affinities onto this given task. Consider the group consisting of tasks $\{a, b, c\}$. We can compute the total inter-task affinity onto task $a$ by averaging the pair-wise affinities from tasks $b$ and $c$ onto $a$:  $\sfrac{(\avgaffinity{b}{a} + \avgaffinity{c}{a})}{2}$.

After approximating higher-order affinity scores for each network consisting of three or more tasks, we select a set of $k$ multi-task networks such that the total affinity score onto each task used during inference is maximized. Informally, each task that is being served at inference should train with the tasks which most decrease its loss throughout training. This problem is NP-hard (reduction from Set-Cover) but can be solved efficiently with a branch-and-bound-like algorithm as detailed in \cite{standley2019tasks} or with a binary integer programming solver as done by \cite{taskonomy}. 

\subsection{Theoretical Analysis}
\label{sec:theory}
We now offer a theoretical analysis of our measure of inter-task affinity. Specifically, our goal is to provide an answer to the following question: given that task $b$ induces higher inter-task affinity than task $c$ on task $a$,  does training $\{a,b\}$ together result in a lower loss on task $a$ than training $\{a,c\}$? Intuitively, we expect the answer to this question to always be positive. However, it is easy to construct  counter examples for simple quadratic loss functions where training $\{a, b\}$ actually induces a higher loss than training $\{a, c\}$ (see Appendix). Nonetheless, we show that in a convex setting and under some mild assumptions, the grouping suggested by our measure of inter-task affinity must induce a lower loss value on task $a$.

For simplicity of notation, we ignore the task specific parameters and use $L_a(\theta)$ to denote the loss of task $a$ evaluated at shared parameters $\theta$. Additionally, we denote the gradient of tasks $a$, $b$, and $c$ at $\theta$ as $g_a$, $g_b$, and $g_c$, respectively.
\begin{restatable}{lemma}{lemmainner}
\label{lemma:inner_product}
Let $L_a$ be a $\alpha$-strongly convex and $\beta$-strongly smooth loss function. Given that task $b$ induces higher inter-task affinity than task $c$ on task $a$, the following inequality holds:
\begin{equation}
\label{eq:adotb}
g_a \cdot g_c - \frac{\beta\,\eta}{2}\,\Vert g_c\Vert^2 + \frac{\alpha\,\eta}{2}\,\Vert g_b\Vert^2\leq g_a \cdot g_b  
\end{equation}
\end{restatable}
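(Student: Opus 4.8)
The plan is to unwind the definition of inter-task affinity into a statement purely about the values of $L_a$ at the two lookahead points $\theta_s^t - \eta g_b$ and $\theta_s^t - \eta g_c$, and then to sandwich that statement between the two first-order quadratic bounds supplied by $\alpha$-strong convexity and $\beta$-strong smoothness.

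First I would translate the hypothesis. Write $\theta \coloneqq \theta_s^t$ and recall that $\mathcal{Z}^t_{i \shortarrow a} = 1 - L_a(\theta - \eta g_i)/L_a(\theta)$. The assumption that task $b$ induces higher inter-task affinity than task $c$ on task $a$ is $\affinity{b}{a} \geq \affinity{c}{a}$; since the losses are non-negative and $L_a(\theta) > 0$, clearing this common positive denominator turns the hypothesis into the equivalent scale-free statement
\begin{equation*}
L_a(\theta - \eta g_b) \leq L_a(\theta - \eta g_c).
\end{equation*}

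Next I would expand each side around $\theta$ using the step direction $y - \theta = -\eta g_b$ (resp. $-\eta g_c$) and $\nabla L_a(\theta) = g_a$. Strong convexity gives the lower bound $L_a(\theta - \eta g_b) \geq L_a(\theta) - \eta\, g_a \cdot g_b + \frac{\alpha \eta^2}{2}\Vert g_b\Vert^2$, while strong smoothness gives the upper bound $L_a(\theta - \eta g_c) \leq L_a(\theta) - \eta\, g_a \cdot g_c + \frac{\beta \eta^2}{2}\Vert g_c\Vert^2$. Chaining these two bounds through the rewritten hypothesis, cancelling the common term $L_a(\theta)$, dividing by $\eta > 0$, and rearranging yields precisely \pref{eq:adotb}.

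I do not expect a deep obstacle. The steps that deserve care are (i) verifying that $L_a(\theta_s^t)$ is strictly positive, so that the ratio defining $\mathcal{Z}^t$ is well defined and the sign manipulation is legitimate — this is where the non-negativity assumption on the losses, together with the implicit assumption that $L_a$ has not already reached zero at the current iterate, is used — and (ii) pairing the inequalities correctly, i.e. applying the convexity (lower) bound on the $g_b$ side and the smoothness (upper) bound on the $g_c$ side, since swapping them would collapse the argument into a vacuous inequality. Throughout, $\eta$ is taken to be positive, consistent with the SGD update in the excerpt.
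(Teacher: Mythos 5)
Your proposal is correct and follows essentially the same route as the paper's proof: rewrite $\affinity{b}{a}\ge\affinity{c}{a}$ as $L_a(\theta-\eta g_b)\le L_a(\theta-\eta g_c)$, then chain the strong-convexity lower bound on the $g_b$ side against the strong-smoothness upper bound on the $g_c$ side and rearrange. Your added care about the positivity of $L_a(\theta)$ and the correct pairing of the two quadratic bounds is exactly the right place to be careful, and matches what the paper does implicitly.
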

The proof is given in the Appendix.
\begin{restatable}{proposition}{propmain}
\label{prop:main}
Let $L_a$ be a $\alpha$-strongly convex and $\beta$-strongly smooth loss function. Let $\eta \leq \frac{1}{\beta}$ be the learning rate. Suppose that in a given step, task $b$ has higher inter-task affinity than task $c$ on task $a$. Moreover, suppose that the gradients have equal norm, i.e. $\Vert g_a\Vert = \Vert g_b\Vert = \Vert g_c\Vert$. Then, taking a gradient step on the parameters using the combined gradient of $a$ and $b$ reduces $L_a$ more so than taking a gradient step on the parameters using the combined gradient of $a$ and $c$, given that $\cos(g_a, g_c) \leq \frac{\eta}{4} \frac{\alpha\, \beta}{\beta - \alpha} - 1
$ where $\cos(u, v) \coloneqq \frac{u \cdot v}{\Vert u\Vert\, \Vert v\Vert}$ is the cosine similarity between $u$ and $v$.
\end{restatable}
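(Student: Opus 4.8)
The plan is to reduce the claim to a scalar inequality among the three gradient inner products and then close it using Lemma~\ref{lemma:inner_product} together with the equal-norm and cosine hypotheses. Write $\theta^{+}_{ab} \coloneqq \theta - \eta(g_a + g_b)$ and $\theta^{+}_{ac} \coloneqq \theta - \eta(g_a + g_c)$ for the two candidate updates; since both start from the same $\theta$, the assertion that $\{a,b\}$ reduces $L_a$ more than $\{a,c\}$ is exactly $L_a(\theta^{+}_{ab}) \le L_a(\theta^{+}_{ac})$ (the case $g_a=g_b=g_c=0$ being trivial).

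First I would sandwich the two quantities. The descent lemma for a $\beta$-smooth function gives
\[
L_a(\theta^{+}_{ab}) \le L_a(\theta) - \eta\, g_a\cdot(g_a+g_b) + \frac{\beta\eta^2}{2}\Vert g_a+g_b\Vert^2 ,
\]
and $\alpha$-strong convexity gives the matching lower bound
\[
L_a(\theta^{+}_{ac}) \ge L_a(\theta) - \eta\, g_a\cdot(g_a+g_c) + \frac{\alpha\eta^2}{2}\Vert g_a+g_c\Vert^2 .
\]
It therefore suffices to show that the right-hand side of the first display is at most the right-hand side of the second; the $L_a(\theta)$ and $-\eta\Vert g_a\Vert^2$ terms cancel, leaving an inequality involving only $g_a\cdot g_b$, $g_a\cdot g_c$, $\Vert g_a+g_b\Vert^2$ and $\Vert g_a+g_c\Vert^2$.

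Next I would impose the equal-norm hypothesis $\Vert g_a\Vert = \Vert g_b\Vert = \Vert g_c\Vert = r$, use $\Vert g_a+g_b\Vert^2 = 2r^2\bigl(1+\cos(g_a,g_b)\bigr)$ and likewise for $c$, and divide through by $r^2$, so the target becomes an inequality that is affine in $\cos(g_a,g_b)$ and $\cos(g_a,g_c)$. Because $\eta \le 1/\beta$, the coefficient $1-\beta\eta$ of $\cos(g_a,g_b)$ is nonnegative, so the inequality is tightest when $\cos(g_a,g_b)$ is as small as the affinity hypothesis allows. Lemma~\ref{lemma:inner_product} — the quantitative form of ``$b$ has higher affinity than $c$ on $a$'' — specializes under equal norms to $\cos(g_a,g_b) \ge \cos(g_a,g_c) - \tfrac{(\beta-\alpha)\eta}{2}$; substituting this extremal value reduces everything to a single inequality in $\cos(g_a,g_c)$, which I would then solve for $\cos(g_a,g_c)$ to obtain the claimed condition.

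I expect the main obstacle to be the slack conceded in the sandwich step: the upper bound on $L_a(\theta^{+}_{ab})$ is controlled by $\beta$ while the lower bound on $L_a(\theta^{+}_{ac})$ is controlled by $\alpha$, so a gap of order $(\beta-\alpha)\eta^2$ appears before the affinity hypothesis is even invoked. The cosine condition is precisely what certifies that $g_c$ is antagonistic enough to $g_a$ — so that the $\{a,c\}$ step makes little headway on $L_a$ — to overcome this gap. I would also need to track signs carefully ($1-\beta\eta\ge 0$, $1-\alpha\eta>0$, $\beta>\alpha$) to justify the monotonicity step that plugs in the extremal value from Lemma~\ref{lemma:inner_product}, and finally check that the resulting threshold is consistent with $\cos(g_a,g_c)\ge -1$, i.e.\ that the hypothesis is not vacuous in the relevant regime of $\alpha,\beta,\eta$.
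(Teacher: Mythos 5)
Your plan coincides with the paper's proof: the same $\beta$-smoothness upper bound on the $\{a,b\}$ update, the same $\alpha$-strong-convexity lower bound on the $\{a,c\}$ update, cancellation of the common $L_a(\theta)$ and $\Vert g_a\Vert^2$ terms, substitution of the extremal value of $g_a \cdot g_b$ supplied by Lemma~\ref{lemma:inner_product} (legitimate precisely because its coefficient $\eta\beta - 1$ is non-positive), and the equal-norm reduction to a linear inequality in $\cos(g_a, g_c)$. Nothing essential is missing from your outline.
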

The proof is given in the Appendix. Proposition~\ref{prop:main} intuitively implies the grouping chosen by maximizing per-task inter-task affinity is guaranteed to make more progress than any other group. 
Moreover, the assumptions for this condition to hold in the convex setting are fairly mild.
Specifically, our first assumption relies on the gradients to have equal norms, but our proof can be generalized to when the gradient norms are approximately equal. 
This is often the case during training when tasks use similar loss functions and/or compute related quantities. 
The second assumption relies on the cosine similarity between the lower-affinity task $c$ and the primary task $a$ being smaller than a constant.
This is a mild assumption since the ratio $\frac{\sfrac{\beta}{\alpha}}{\sfrac{\beta}{\alpha} - 1}$ becomes sufficiently large for the assumption to hold trivially when the Hessian of the loss has a sufficiently small condition number.

\section{Experiments}
\label{sec:expts}
We evaluate\footnote{Our code is available at {\color{urlorange} \href{https://github.com/google-research/google-research/tree/master/tag}{github.com/google-research/google-research/tree/master/tag}}.} the capacity of \method to select task groupings on CelebA, a large-scale face attributes dataset~\citep{celeba} and Taskonomy, a massive computer vision dataset of indoor scenes~\citep{taskonomy}. Following this analysis, we direct our focus towards answering the following questions with ablation experiments on CelebA:
\begin{itemize}[leftmargin=*]
    \item Does our measure of inter-task affinity align with identifying which tasks should train together?
    \item Should inter-task affinity be measured at every step of training to determine task groupings?
    \item Is measuring the change in train loss comparable with the change in validation loss when computing inter-task affinity?
    \item How do inter-task affinities change over the course of training? 
    \item Do changes in a model's hyperparameters change which tasks should be trained together?
\end{itemize}
As described in \Secref{sec:problem_definition}, we constrain all networks to have the same number of parameters to adhere to a fixed inference-time latency constraint. We provide experimental results removing this constraint, as well as additional experimental results and detail relating to experimental design, in the Appendix. 

\subsection{Supervised Task Grouping Evaluation}
\label{sec:task_grouping}
For our task grouping evaluation, we compare two classes of approaches: approaches that determine task groupings, and approaches that train on all tasks together but alter the optimization. In the first class, we consider simply training all tasks together in the same network (MTL), training every task by itself (STL), the expected value from randomly selecting task groupings (RG), grouping tasks by maximizing inter-task cosine similarity between pairs of gradients (CS), our method (\method), and HOA \citep{standley2019tasks} which approximates higher-order task groupings from pair-wise task performance. For the latter class, we consider Uncertainty Weights (UW) \citep{uncertainty}, GradNorm (GN) \citep{gradnorm}, and PCGrad \citep{pcgrad}. In principle, these two classes of approaches are complementary and can be combined. 

Our empirical findings are summarized in \Figref{fig:task_groupings}. For the task grouping methods \method, CS, and HOA, we report the time to determine task groupings, not determine task groupings and train the resultant multi-task networks. The runtime of RG is fixed to the time it takes to train a single multi-task network, and the runtime of STL is reported as the time it takes to train all single task networks. This is to facilitate comparison between the efficiency of different task grouping methods and provide a high-level overview of how long it takes to select task groupings compared to popular multi-task learning benchmarks. We also include a comparison of the time to determine task groupings and train the resultant multi-task networks among task grouping methods in \Figref{fig:time_to_train}.

\begin{figure*}[t!]
    \vspace{-0.2cm}
    \centering
    \begin{center}
    \includegraphics[width=0.48\textwidth]{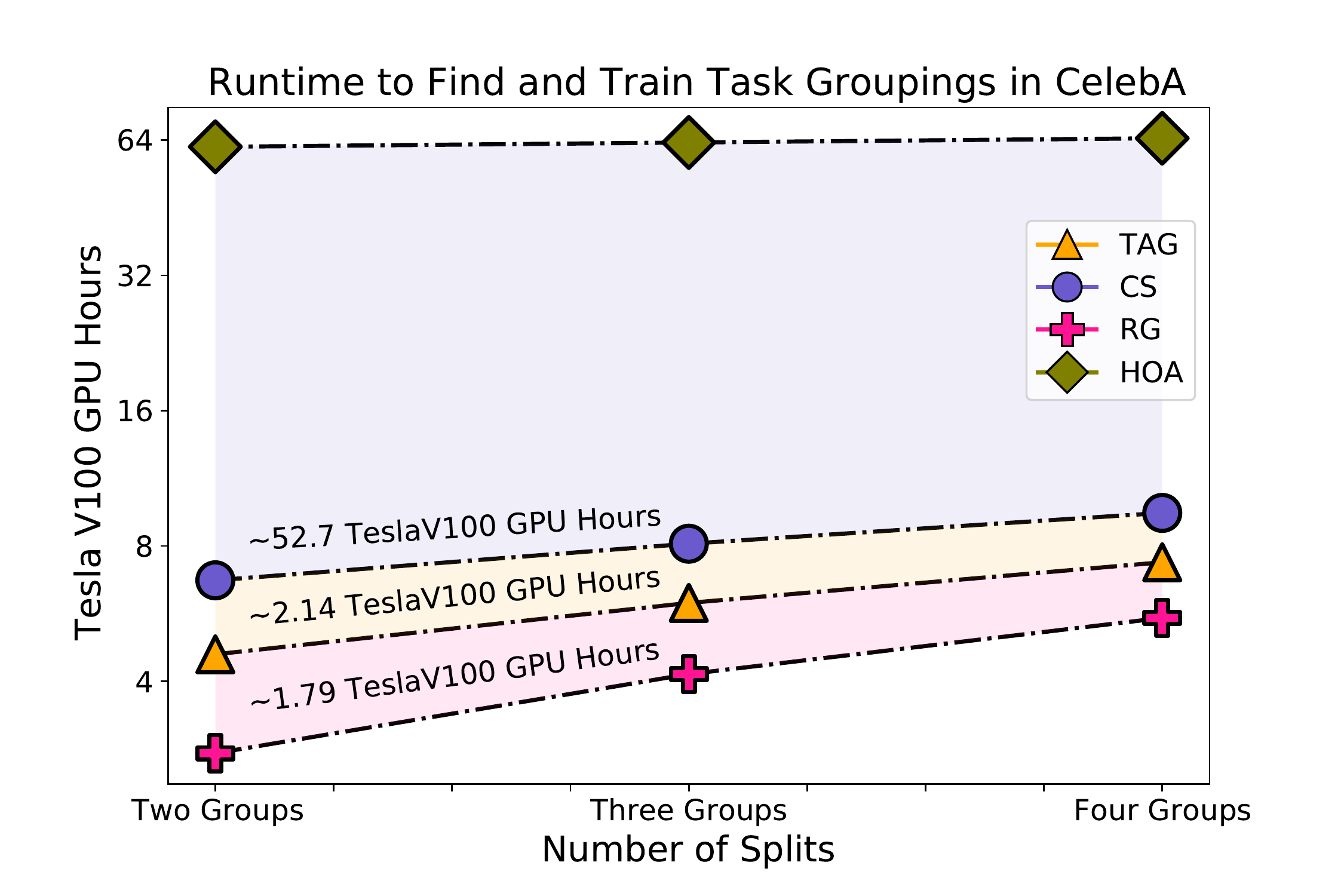}\hspace{0.3cm}
     \includegraphics[width=0.48\textwidth]{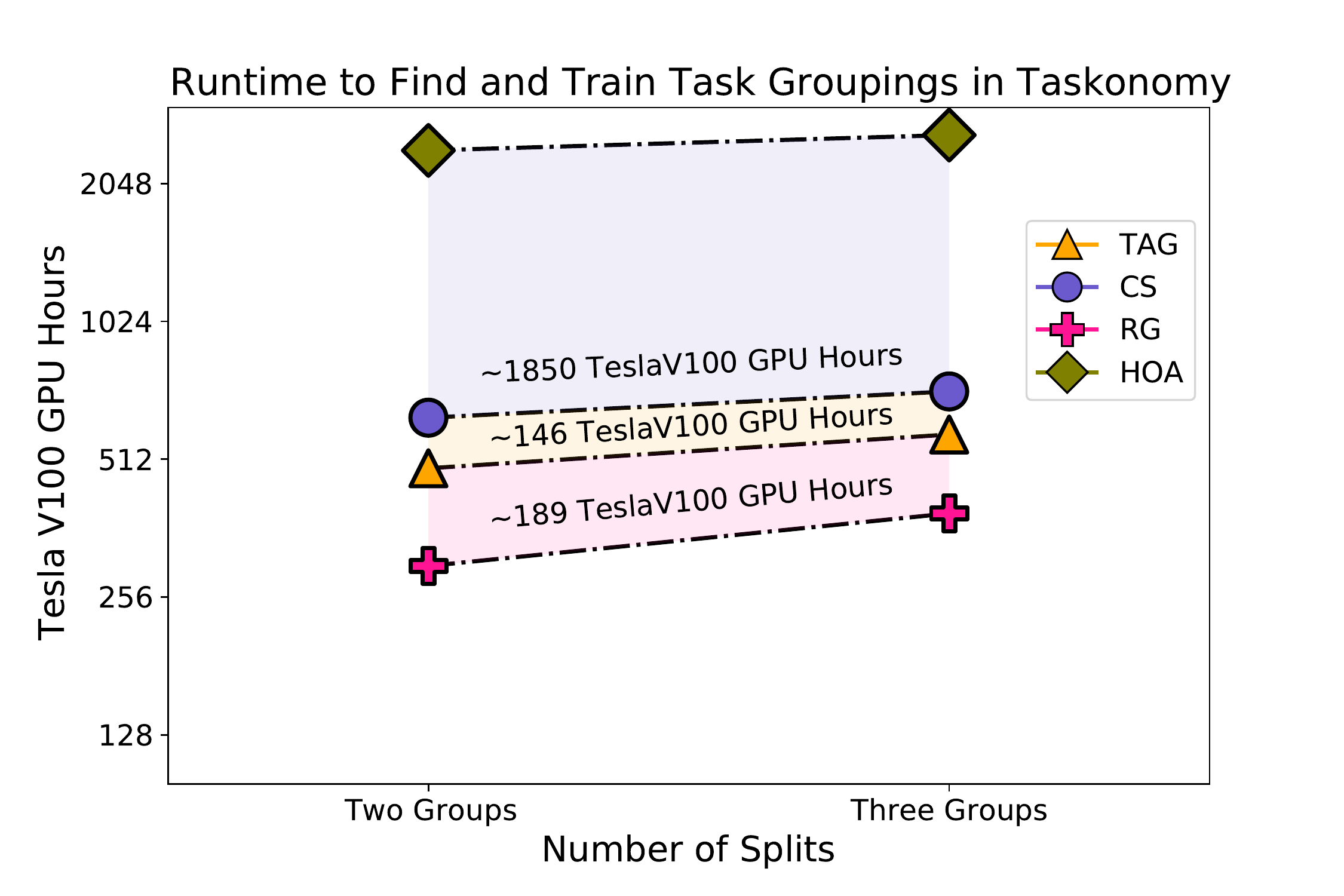}\hfill
    \end{center}
    \caption{(Left) time to determine and train task groupings in CelebA. (Right) time to determine and train task groupings in Taskonomy. Note the \textbf{y-axis is in log scale}, and the time to determine task groupings is incurred only once to determine groupings for all splits.}
    \label{fig:time_to_train}
\end{figure*}

\begin{figure*}[t!]
    \vspace{-0.1cm}
    \centering
    \begin{center}
    \includegraphics[width=0.48\textwidth]{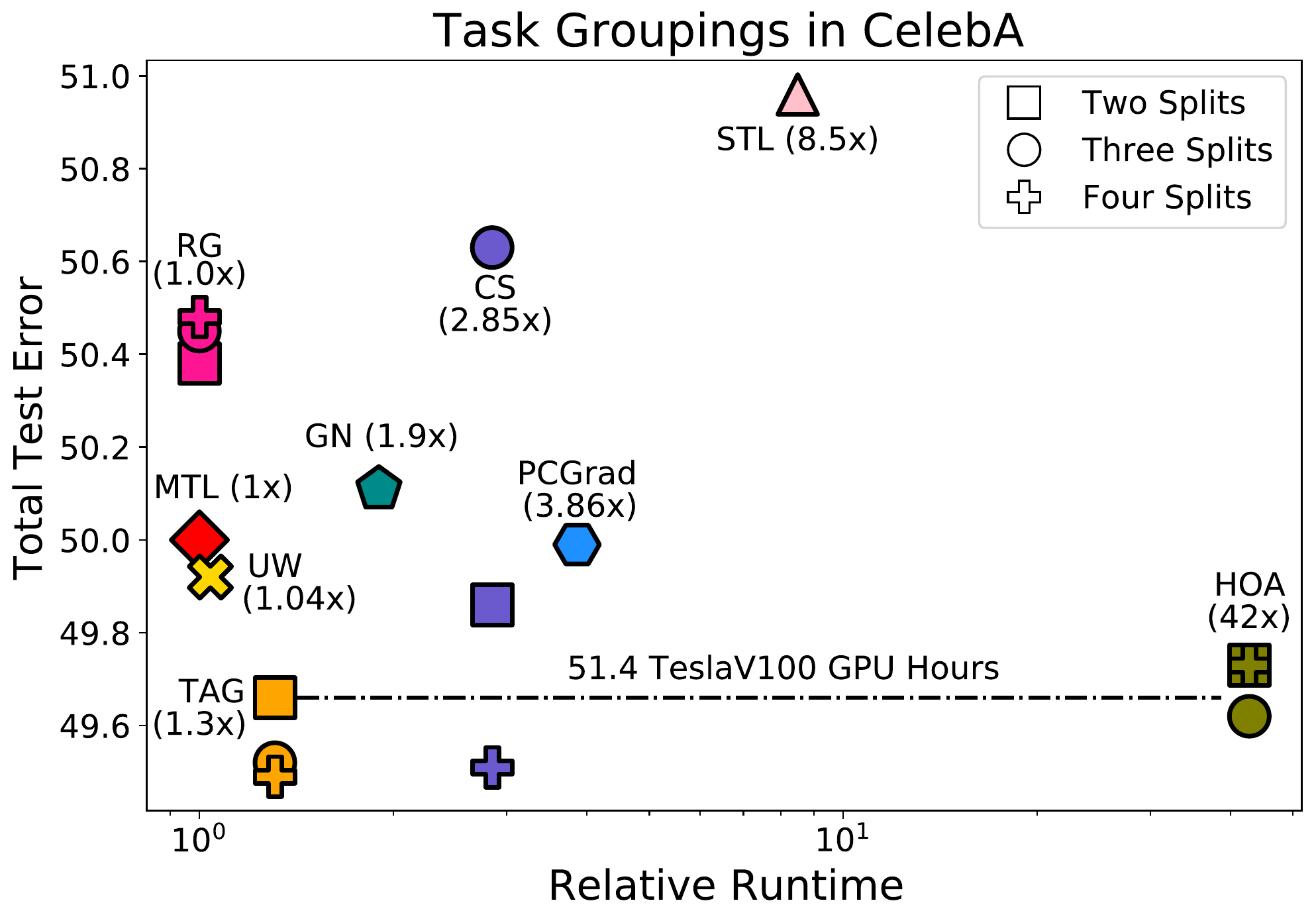}\hspace{0.3cm}
     \includegraphics[width=0.48\textwidth]{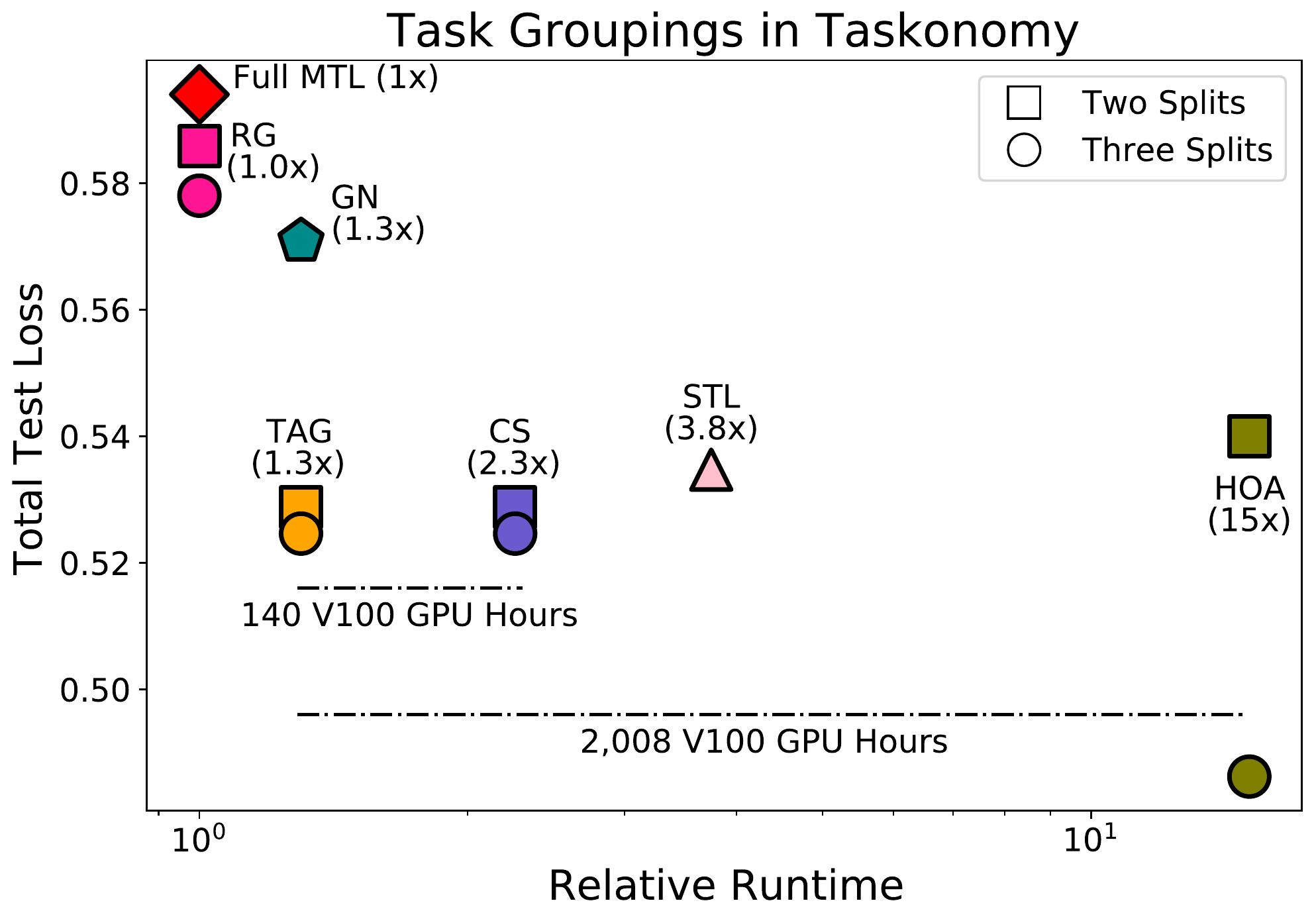}\hfill
    \end{center}
    \caption{(Left) average classification error for 2, 3, and 4-split task groupings for the subset of 9 tasks in CelebA. (Right) total test loss for 2 and 3-split task groupings for the subset of 5 tasks in Taskonomy. All models were run on a TeslaV100 instance with the time to train the full MTL model being approximately 83 minutes in CelebA and 146 hours in Taskonomy. Note the x-axis is in log scale, and the relative runtime for \method, CS, and HOA only considers the time to find groups.}
    \label{fig:task_groupings}
    \vspace{-0.4cm}
\end{figure*}

\textbf{CelebA.}
We select a subset of 9 attributes \{a1, a2, a3, a4, a5, a6, a7, a8, a9\} from the 40 possible attributes in CelebA and optimize the baseline MTL model by tuning architecture, batch size, and learning rate to maximize the performance of training all tasks together on the validation set. We do not tune other methods with the exception of GradNorm, for which we search over \{0.1, 0.5, 1.0, 1.5, 2.0, 3.0, 5.0\} for alpha. For task grouping algorithms, we evaluate the set of \{2-splits, 3-splits, 4-splits\} inference-time memory budgets. Each of the multi-task networks has the same number of parameters as the base MTL model; however, the inference-time latency constraint is satisfied as all models may run in parallel. Our findings are summarized in \Figref{fig:task_groupings} (left). Alternatively, increasing the number of parameters within a single model may significantly increase the time to make a forward pass during inference. \Figref{fig:no_latency_task_groupings} visualizes our findings when we remove the inference-time latency constraint from comparison systems. 

We find the performance of \method surpasses that of the HOA, RG, and CS task grouping methods, while operating 22 times faster than HOA. We also find UW, GN, and PCGrad to perform worse than the groups found by either HOA or \emethod, suggesting the improvement from identifying tasks which train well together cannot be replaced with current multi-task training augmentations. Nevertheless, we find multi-task training augmentations can be complementary with task grouping methods. For example, augmenting the groups found by \method with PCGrad improves 2-splits performance by 0.85\%, 3-splits performance by 0.18\%, but changes 4-splits performance by -0.08\%.

Similar to the results from~\citep{standley2019tasks}, we find GradNorm~\citep{gradnorm} can sometimes perform worse than training all tasks together. We reason this difference is due to our common experimental design that loads the weights from the epoch with lowest validation loss to reduce overfitting, and differs from prior work which uses model weights after training for 100 epochs~\citep{moo}. Reformulating our design to mirror~\citep{moo}, we find training the model to 100 epochs without early stopping results in worse performance on MTL, GradNorm, UW, and PCGrad; however, with this change, each training augmentation method now significantly outperforms the baseline MTL method. 

\textbf{Taskonomy.} 
Following the experimental setup of \citep{standley2019tasks}, we evaluate the capacity of \method to select task groupings on the ``Semantic Segmentation'', ``Depth Estimation'', ``Keypoint Detection'', ``Edge Detection'', and ``Surface Normal Prediction'' objectives in Taskonomy. Unlike \citep{standley2019tasks}, our evaluation uses an augmented version of the medium Taskonomy split (2.4 TB) as opposed to the Full+ version (12 TB) to reduce computational overhead and increase reproducibility. 

\begin{wraptable}{R}{0.36\textwidth}
    \vspace{-10pt}
    \centering
    \scriptsize
    \begin{tabular}{c|CCC}
    \toprule
    \multirow{1.5}[4]{*}{Tasks} & \multicolumn{3}{c}{Improvement in Test Accuracy}\\
    & \multicolumn{3}{c}{Relative to Random Grouping}\\
    \cmidrule(lr){2-4} & \multicolumn{1}{c}{optimal} & \multicolumn{1}{c}{(ours)} & \multicolumn{1}{c}{worst} \\
    \midrule 
    a1 & 2.60 & 2.6 & -3.03\\
    a2 & 1.53 & 1.29 & -1.95\\
    a3 & 2.37 & 1.72 & -3.04\\
    a4 & 2.67 & 0.72 & -4.14\\
    a5 & 2.75 & 2.29 & -3.87\\
    a6 & 2.04 & $\text{ 0.00\%}$ & -2.08\\ 
    a7 & 2.40 & 0.74 & -2.43\\
    a8 & 1.61 & -1.59 & -1.59\\
    a9 & 8.38 & 8.38 & -6.21\\
    \bottomrule 
    \end{tabular}
    \vspace{-2pt}
    \caption{\footnotesize{Performance of each task when trained with it's partner task \emph{relative} to the expected performance from random groupings.}}
    \vspace{-8pt}
    \label{tab:partner}
\end{wraptable}

Our results are summarized in \Figref{fig:task_groupings} (right). Similar to our findings on CelebA, \method continues to outperform MTL  by 10.0\%, GN by 7.7\%, STL by 1.5\%, and RG by 9.5\%. Comparing \method to HOA, we find the 2-split task grouping found by \method to surpass the performance of that found by HOA by 2.5\%, but HOA's 3-split task grouping performance is superior to that of \emethod. In terms of compute, \method is significantly more efficient than HOA, with HOA demanding an additional 2,008 TeslaV100 GPU hours to find task groupings. To put this cost into perspective, on an 8-GPU, on-demand p3.16xlarge AWS instance, the difference in monetary expenditure between \method and HOA would be \$6,144.48. On a similar note, the performance of \method and CS on Taskonomy are equivalent, but \method is more efficient, requiring 140 fewer TeslaV100 GPU hours to compute task groupings. 

\subsection{Multi-Task Ablation Studies}
\label{sec:ablations}
\textbf{Does our measure of inter-task affinity correlate with optimal task groupings?}
To further evaluate if \method can be used to select which tasks should train together in multi-task learning models, we evaluate its capacity to select the best training partner for a given task. We compare with the optimal (best) and worst auxiliary task computed from the test set and normalize scores with respect to the expected performance of selecting an auxiliary task at random. 

Our findings are summarized in \Tableref{tab:partner} and indicate the performance of auxiliary tasks found by \method correlates with the performance of the optimal auxiliary task (Pearson's Correlation of 0.93\%). However, a notable exception occurs with attribute a8 where \method actually selects the worst partner. 
In this instance, and unlike every other objective in our dataset, no task manifests especially high or low inter-task affinity onto a8. The difference in normalized inter-task affinity for a8 between the best and worst partner predicted by inter-task affinity is 0.04, while the next smallest difference is 4x larger at 0.16 for a3. A table showing these differences is included in the Appendix. This case represents a limitation of \emethod, and it will struggle to find the best auxiliary task for objectives like a8. 

However, this weakness does not seem to significantly affect the capacity of \method to select strong task groupings. As no other task exhibits especially high or low inter-task affinity onto a8, it is often slotted into a larger task group rather than being one of the tasks with a large difference in inter-task affinity which precipitates the formation of a new group.

\textbf{Should inter-task affinity be computed at every step?}
\label{sec:compute_every_step}
\begin{wraptable}{R}{0.48\textwidth}
    \vspace{-11pt}
    \centering
    \scriptsize
    \begin{tabular}{c|Cc}
    \toprule
    Method & Relative Performance & Relative Speedup \\
    \midrule 
    Every 1 Step & 5.13 & 1.0x \\ 
    Every 5 Steps & 5.13 & 2.56x\\
    Every 10 Steps & 5.13 & 3.19x\\
    Every 25 Steps & 4.84 & 3.73x\\
    Every 50 Steps & 3.73 & 3.96x\\
    Every 100 Steps & 2.06 & 4.08x\\
    First 25\% & 3.67 & 4.00x\\
    Middle 25\% & 4.31 & 2.95x\\
    Final 25\% & 4.02 & 2.34x\\
    \bottomrule 
    \end{tabular}
    \vspace{-0.3cm}
    \caption{\footnotesize{Change in total test accuracy across inference-time memory budget of \{2-groups, 3-groups, 4-groups\} relative to the expected performance from random groupings. Speedup is relative to computing inter-task affinity in each step.}}
    \vspace{-0.4cm}
    \label{tab:speed}
\end{wraptable}

It is likely that the inter-task affinity of consecutive steps are similar, and it could be the case that inter-task affinity signals at the beginning, middle, or end of training are sufficient for selecting which tasks should train together. In particular, if task relationships crystallize early in training, computing inter-task affinities during the initial stages of training may be sufficient. We evaluate both hypotheses on CelebA and our results are summarized in \Tableref{tab:speed}.

Our findings indicate significant redundancy can be eliminated without degrading performance by computing inter-task affinities every 10 steps rather than every step. This modification increases training-time efficiency by 319\% and is used in \Secref{sec:task_grouping} to compute task groupings. After this threshold, signal strength decreases and leads to higher task grouping error.  We also find that computing inter-task affinities in the first 25\%, middle 25\%, or final 25\% of training degrades task-grouping performance. This result suggest the relationships among tasks change throughout training as measured by inter-task affinity. As a result, we choose to average inter-task affinity scores throughout the entirety of training to determine which tasks should train together. 

\textbf{Is change in train loss comparable to change in validation loss?} 
Given multi-task learning's capacity to improve generalization, computing the change in validation loss after a gradient step may capture a more informative signal as to which tasks should train together. On the other hand, certain datasets may not contain a validation split and loading a batch from the validation set every 10 steps of training would decrease efficiency. 

To our surprise, the inter-task affinity scores computed on the validation set are very similar to the inter-task affinity scores computed on the training set (Pearson's Coefficient: 0.9804). For context, this similarity with computing inter-task affinities every step is greater than any other ablation in \Tableref{tab:speed} with the exception of ``Every 5 Steps'' as measured by Pearson's Coefficient. Moreover, the performance of groupings found by both methods are similar: 49.574 average total error across our inference time budget of \{2-groups, 3-groups, 4-groups\} compared with 49.576 for groupings found on the validation set.

More formally, let $X_{\text{\tiny tr}}$ and $X_{\text{\tiny val}}$ be independent and identically distributed random variables for training and validation, respectively. Given the updated shared parameter $\theta^{t+1}_{s|i}$ using the gradient of task $i$ calculated on $X_{\text{\tiny tr}}$, the loss of task $j$ (s.t. $j\neq i$) yields similar expectations with respect to $X_{\text{\tiny tr}}$ and $X_{\text{\tiny val}}$. That is,
\begin{align*}
    \E_{\text{\tiny val}}[\E_{\text{\tiny tr}}[\mathcal{L}_j(X^t_{\text{\tiny val}}, \theta^{t+1}_{s|i}, \theta^t_j)]] & \approx \E_{\text{\tiny val}}[\mathcal{L}_j(X^t_{\text{\tiny val}}, \E_{\text{\tiny tr}}[\theta^{t+1}_{s|i}], \theta^t_j)] = \E_{\text{\tiny tr}}[\mathcal{L}_j(X^t_{\text{\tiny tr}}, \E_{\text{\tiny tr}}[\theta^{t+1}_{s|i}], \theta^t_j)]\, .
\end{align*}
The leftmost term is a joint expectation w.r.t. both $X_{\text{\tiny tr}}$ and $X_{\text{\tiny val}}$, in which $X_{\text{\tiny tr}}$ is only used to calculate the updated shared parameters $\theta^{t+1}_{s|i}$. Assuming both tasks have distinct loss functions that do not directly depend on each other during training, we can move the second expectation inside the loss function to obtain the middle term. By using the fact that $X_{\text{\tiny tr}}$ and $X_{\text{\tiny val}}$ are identically distributed, thus yielding the same expectation, we can move from the middle term to the rightmost term. Hence, the inter-task affinity computed on the training dataset would approximately equal the inter-task affinity computed on the validation set. 

\begin{figure*}[t!]
    \vspace{-0.2cm}
    \centering
    \begin{center}
    \includegraphics[width=0.90\textwidth]{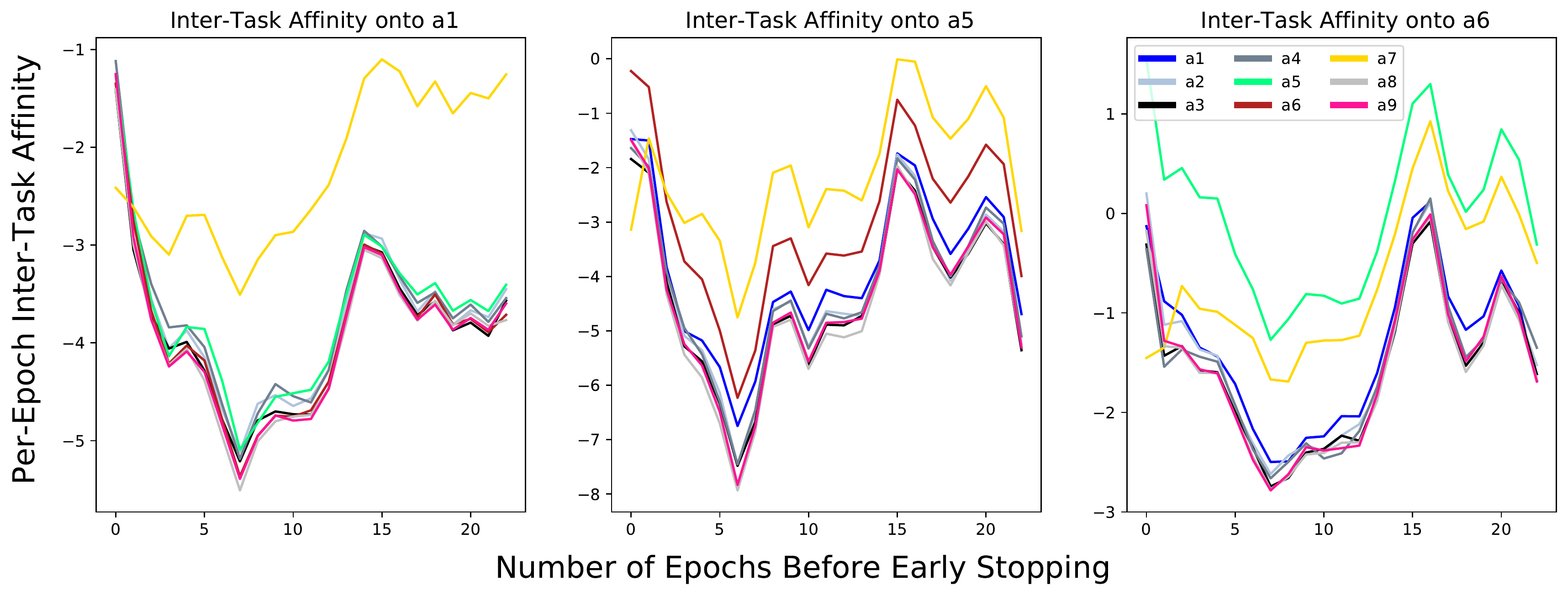}
    \end{center}
    \caption{Total per-epoch inter-task affinities onto tasks \{a1, a5, a6\} in the CelebA dataset. The y-axis signifies the total change in train loss after applying an update to the shared parameters. Note the relationship among affinities changes. For instance in the center plot representing the inter-task affinity onto a5, a6 initially manifests higher affinity onto a5 than does a7. In later stages of training, this trend is reversed. In the same subplot, a1 manifests higher inter-task affinity over the tasks \{a3, a4, a8, a9\} at different stages of training.}
    \label{fig:tag_time}
    \vspace{-0.20cm}
\end{figure*}

\textbf{How do inter-task affinities change over the course of training?}
\label{sec:affinities_time}
Our analysis on CelebA and Taskonomy suggests inter-task affinities change throughout training, but no obvious patterns emerge related to how inter-task affinities change as a function of time across tasks. Nevertheless, our analysis indicates certain tasks exhibit higher-than-average inter-task affinity and tend to maintain this difference throughout training. We visualize this effect in \Figref{fig:tag_time} for three tasks in the CelebA dataset. \Figref{fig:tag_time}(left) shows that task a7 exhibits significantly higher affinity onto task a1 than any other task in the network, and in a similar comparison, \Figref{fig:tag_time}(right) indicates that tasks \{a5, a7\} manifest significantly higher affinity onto a6 than any other task. Shifting our focus to a5 in \Figref{fig:tag_time}(center), we find \{a6, a7\} display markedly higher affinity, and a1 sometimes displays higher affinity, onto a5 than the other tasks in the network.

One additional observation not captured in \Figref{fig:tag_time} relates to the initial steps of training. At this stage of convergence, all tasks seem to manifest positive and similar inter-task affinity. We postulate the representations learned by the model at this early stage may be common among all tasks. For example, the learned representations may extract general-purpose facial characteristics, but specialization occurs quickly thereafter.

\begin{wraptable}{R}{0.42\textwidth}
\vspace{-0.15cm}
    \begin{center}
    \begin{small}
    \begin{tabular}{l|CC}
    \toprule
    \multirow{1.5}[3]{*}{Budget} &   \multicolumn{2}{c}{Improvement in Test Accuracy}\\
    & \multicolumn{2}{c}{Relative to Optimal Groupings}\\
    \cmidrule(lr){2-3} & \multicolumn{1}{c}{b=0.5x} & \multicolumn{1}{c}{lr=2x}\\
    \midrule
      2-groups & -0.61 & -1.22 \\
      3-groups & -0.25 & -2.90 \\
      4-groups & -1.87 & -3.21 \\
    \bottomrule
    \end{tabular}
    \end{small}
    \end{center}
    \vspace{-0.3cm}
    \caption{\footnotesize Accuracy of task groupings found by b=0.5x and lr=2x relative to the typical setting.}
    \label{tab:hparams}
    \vspace{-0.3cm}
\end{wraptable}
\textbf{Can changes in hyperparameters affect task groupings?}
We define three settings: (i) typical, the base setting, (ii) b=0.5x, or halving the batch size, and (iii) lr=2x, or increasing the learning rate by a factor of 2. To determine the ground-truth task groupings, we train all 511 combinations of task groupings from our 9-task subset of CelebA and select the groups in each setting with the lowest total test error. Our aim is to assess the extent to which task groupings chosen in setting b=0.5x or lr=2x generalize to the typical setting. 

Our results are summarized in \Tableref{tab:hparams}. They indicate that simply changing the batch size or learning rate of a model may change which tasks should be trained together, with the groupings found by lr=2x exhibiting worse generalization than those found by b=0.5x. This result suggests that how tasks should be trained together does not simply depend on the relationships among tasks, but also on detailed aspects of the model and training. It is notably difficult to build intuition for the latter, illustrating the need to develop automated methods that can take into account these nuances.

\section{Conclusion}
\label{sec:conclusion}
In this work, we present an approach to quantify inter-task affinity in a single training run and show how this quantity can be used to systematically determine which tasks should train together in multi-task networks. Our empirical findings indicate our approach is highly competitive. It outperforms multi-task training augmentations like Uncertainty Weights, GradNorm, and PCGrad, and performs competitively with state-of-the-art task grouping methods like HOA, while improving computational efficiency by over an order of magnitude. Further, our findings are supported by extensive analysis that suggests inter-task affinity scores can find close to optimal auxiliary tasks, and in fact, implicitly measure generalization capability among tasks. 

A plethora of research has been undertaken to design better multi-task learning architectures, or improve the optimization dynamics within multi-task learning systems, with relatively little work addressing the question of which tasks should train together in the first place. It is our hope this work renews interest in this domain, and given the sensitivity of task groupings to even small changes in hyperparameters, encourages the development of efficient and automatic methods to identify which tasks should train together in multi-task learning networks.

\section{Broader Impact}
\label{sec:broader_impact}
Efficiently identifying task groupings in multi-task learning has the potential to save significant time and computational resources in both academic and industry environments. Despite this benefit, there are several risks associated with this work. In particular, inter-task affinities can be mistakenly interpreted as ``task similarity'', and incorrectly create an association and/or causation relationship among tasks with high mutual inter-task affinity scores. This association would be especially problematic for datasets involving sensitive prediction quantities related to race, gender, religion, age, status, physical traits, etc., where inter-task affinities could be mistakenly used to support an unfounded conclusion that attempts to posit similarity among tasks. That said, we believe acknowledging these risks mitigates their potential for abuse, and the benefit from this work --- most notably decreasing computational resources by over an order of magnitude compared with a state-of-the-art task grouping method while performing competitively in terms of accuracy --- merits its dissemination. 

\section*{Acknowledgement}
We thank Vince Gatto, Bing-Rong Lin, Nick Bridle, Li Wei, Shawn Andrews, Yuan Gao, and Yuyan Wang for their thoughtful discussion related to a precursor of this work. We also thank James Chen and Linda Wang for providing feedback on an earlier draft of this paper. Lastly, we would like to recognize Zirui Wang for technical support with running experiments as well as our anonymous NeurIPS reviewers for their thoughtful feedback and clear desire to improve this work.

\bibliography{references}
\bibliographystyle{plainnat}

\newpage
{\Large \bf Appendix}
\appendix
\section{Proofs of Theoretical Results}
\label{app:theory}
\subsection{Proof of Lemma~\ref{lemma:inner_product}}
\lemmainner*
\begin{proof}
Let $\theta$ be the initial parameters. Let $\theta_b^+ \coloneqq \theta - \eta\, g_b$ and $\theta_c^+ \coloneqq \theta - \eta\, g_c$ denote the updated shared parameters using gradient of task $b$ and $c$, respectively. From inter-task affinity, we have
\[
\mathcal{Z}_{b \shortarrow a} = 1 - \frac{L_a(\theta_b^+)}{L_a(\theta)} \geq 1 - \frac{L_a(\theta_c^+)}{L_a(\theta)} = \mathcal{Z}_{c \shortarrow a}
\]
Thus, we have
\[
L_a(\theta_b^+) \leq L_a(\theta_c^+)
\]
From the strong convexity and strong smoothness assumptions, we can respectively lower-bound and upper-bound the first and second terms. Thus, we have
\[
L_a(\theta) - \eta\, g_a \cdot g_b + \frac{\alpha\,\eta^2}{2}\,\Vert g_b\Vert^2 \leq L_a(\theta) - \eta\, g_a \cdot g_c + \frac{L\,\eta^2}{2}\,\Vert g_c\Vert^2
\]
Rearranging the terms yields the result.
\end{proof}

\subsection{Proof of Proposition~\ref{prop:main}}
\propmain*
\begin{proof}
Applying the strong smoothness upper-bound on the updated loss using the combined gradient $g_a + g_b$, we have
\begin{align*}
L_a(\theta - \eta\, (g_a + g_b)) & \leq L_a(\theta) - \eta\, g_a \cdot (g_a + g_b) + \frac{\eta^2\, \beta}{2}\, \Vert g_a + g_b\Vert^2\\
& = L_a(\theta) + (\eta^2\, \beta - \eta)\, g_a \cdot g_b + (\frac{\eta^2\, \beta}{2} - \eta)\, \Vert g_a\Vert^2 + \frac{\eta^2\, \beta}{2}\, \Vert g_b\Vert^2
\end{align*}
We would like to show that the last line is less than or equal to a lower-bound on the loss obtained using the combined gradient $g_a + g_c$
\begin{align*}
L_a(a) & - \eta\, g_a\, (a_a + g_c) + \frac{\eta^2\, \alpha}{2}\, \Vert g_a + g_c\Vert^2 \leq L_a(\theta - \eta\, (g_a + g_c))
\end{align*}
Eliminating the common terms, we would like to show the following inequality holds
\begin{align*}
(\eta \beta - 1)\, g_a\cdot g_b & + \frac{\eta\, \beta}{2} (\Vert g_a\Vert^2 + \Vert g_b\Vert^2)  \leq (\eta\alpha - 1)\, g_a\cdot g_c + \frac{\eta\alpha}{2} (\Vert g_a\Vert^2 + \Vert g_c\Vert^2)\, .
\end{align*}
Using $\eta \leq \frac{1}{\beta}$, the first term becomes negative. Replacing for $g_a\cdot g_b$ using \pref{eq:adotb}, it suffices to show the following inequality
\begin{multline*}
(\eta \beta - 1)\, (g_a \cdot g_c + \frac{\eta\alpha}{2} (\Vert g_b\Vert^2) - \frac{\eta \beta}{2} (\Vert g_c\Vert^2)) + \frac{\eta \beta}{2} (\Vert g_a\Vert^2 + \Vert g_b\Vert^2)\\  \leq (\eta \alpha - 1)\,g_a \cdot g_c + \frac{\eta \alpha}{2} (\Vert g_a\Vert^2 + \Vert g_c\Vert^2)\, .
\end{multline*}
Rearranging the terms, it suffices to show
\begin{align*}
0 & \leq 	\Vert g_a\Vert^2 (\frac{\eta}{2} (\alpha - \beta))
+ \Vert g_b\Vert^2 ( -\frac{\eta}{2} \alpha (\eta \beta - 1))
+ \Vert g_c\Vert^2 (\frac{\eta}{2} (\alpha - \beta) + \frac{\eta^2}{2} \beta^2) 
+ \eta (\alpha - \beta)\, g_a \cdot g_c\, .
\end{align*}
Under the mild assumption that $\Vert g_a\Vert = \Vert g_b\Vert = \Vert g_c\Vert$, we can rearrange the terms to obtain
$
\cos(g_a, g_c) \leq \frac{\eta}{4} \frac{\sfrac{\beta}{\alpha}}{\sfrac{\beta}{\alpha} - 1} - 1
$, where $\cos(u, v) \coloneqq \frac{u \cdot v}{\Vert u\Vert\, \Vert v\Vert}$ is the cosine similarity between $u$ and $v$.
\end{proof}

\begin{figure*}[t!]
    \centering
    \begin{center}
    \subfigure[The gradient $g_b$ reduces the loss $\mathcal{L}_a$ more so than $g_c$.]{\includegraphics[width=0.48\textwidth]{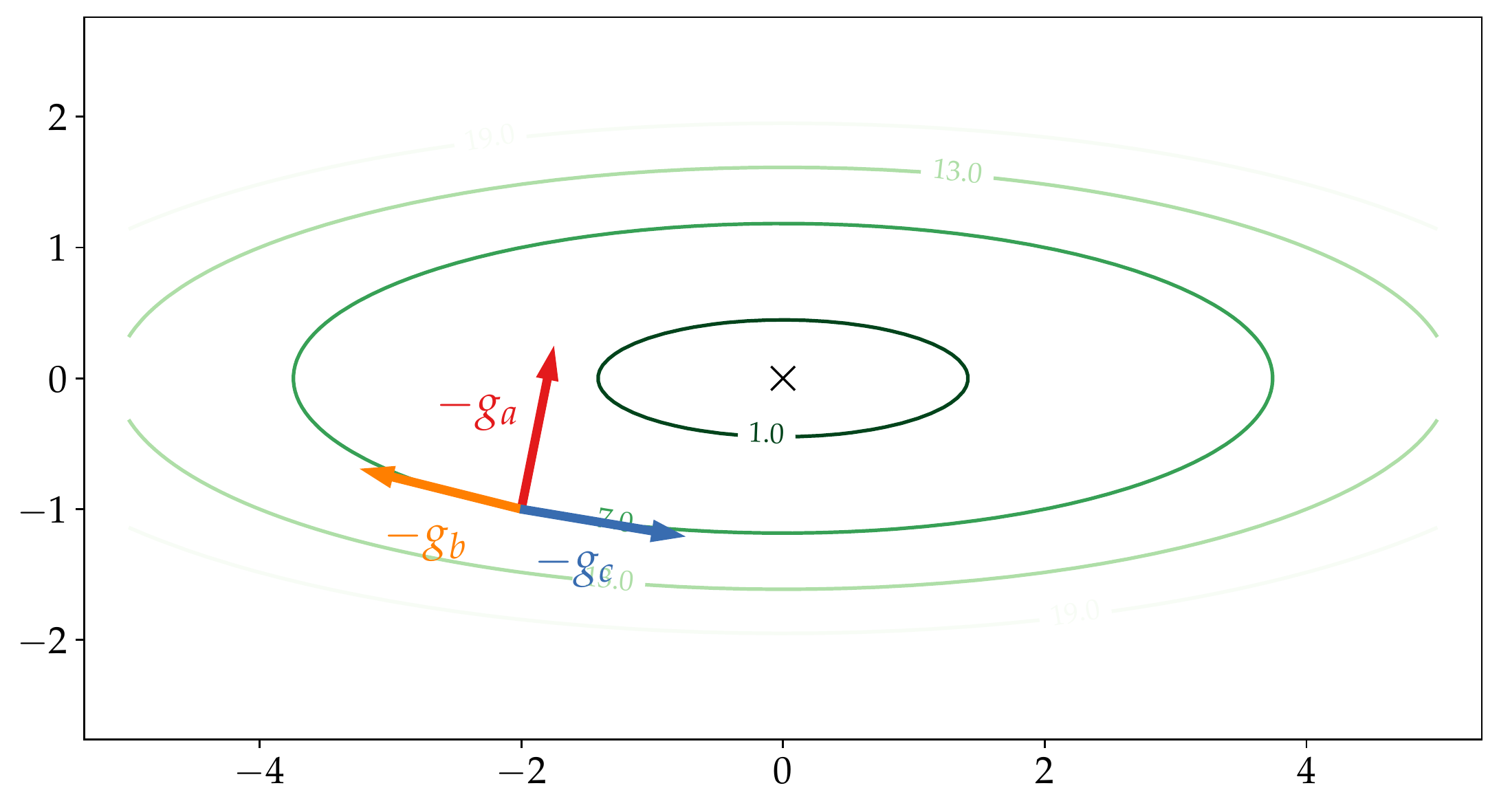}\hspace{0.3cm}}
     \subfigure[The combined gradient $g_a + g_c$ reduces the loss $\mathcal{L}_a$ more so than $g_a + g_b$.]{\includegraphics[width=0.48\textwidth]{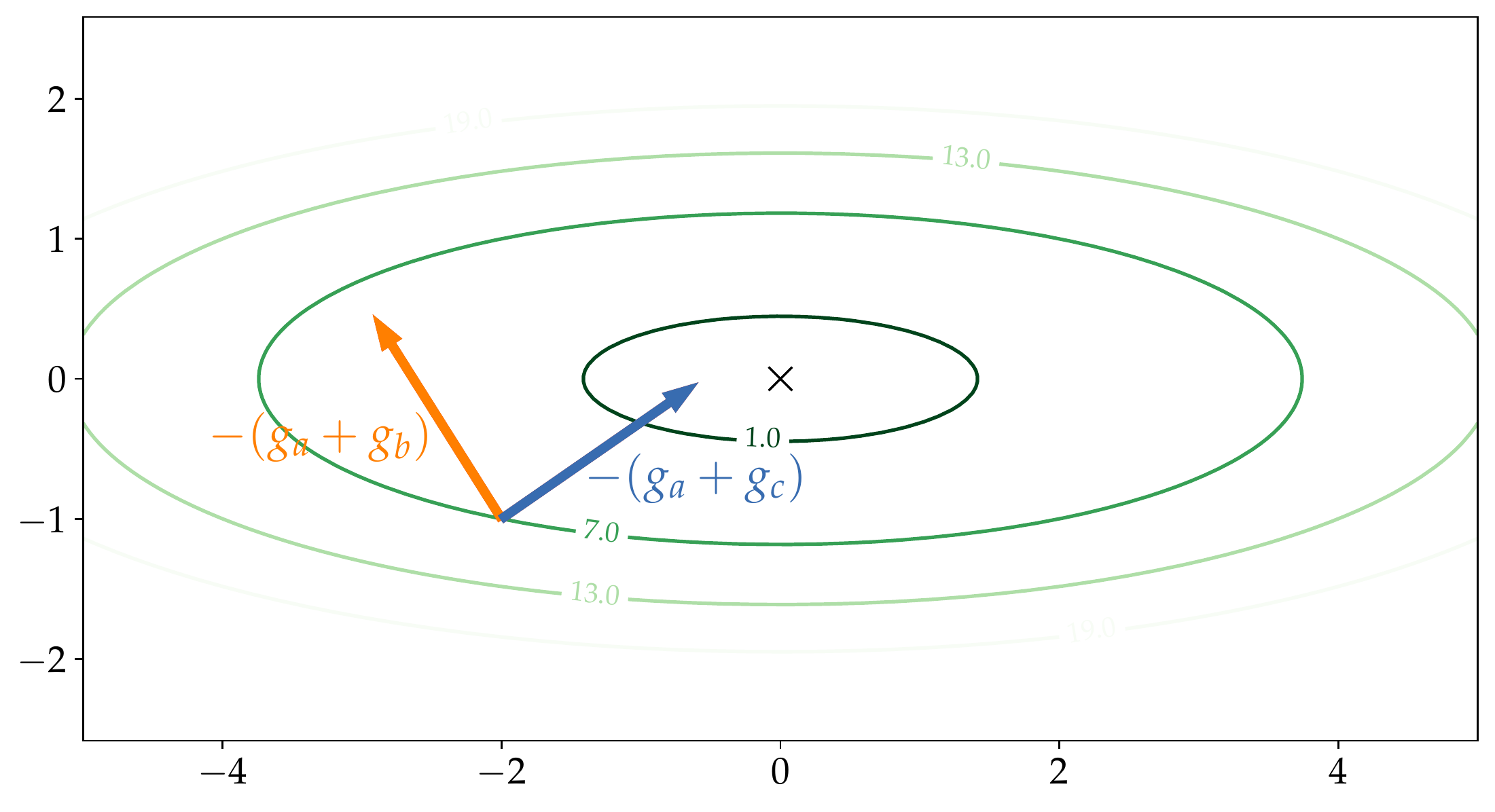}}\hfill\\[-2mm]
     \subfigure[An alternate $g_c$ which satisfies the conditions of Proposition~\ref{prop:main}. ]{\includegraphics[width=0.48\textwidth]{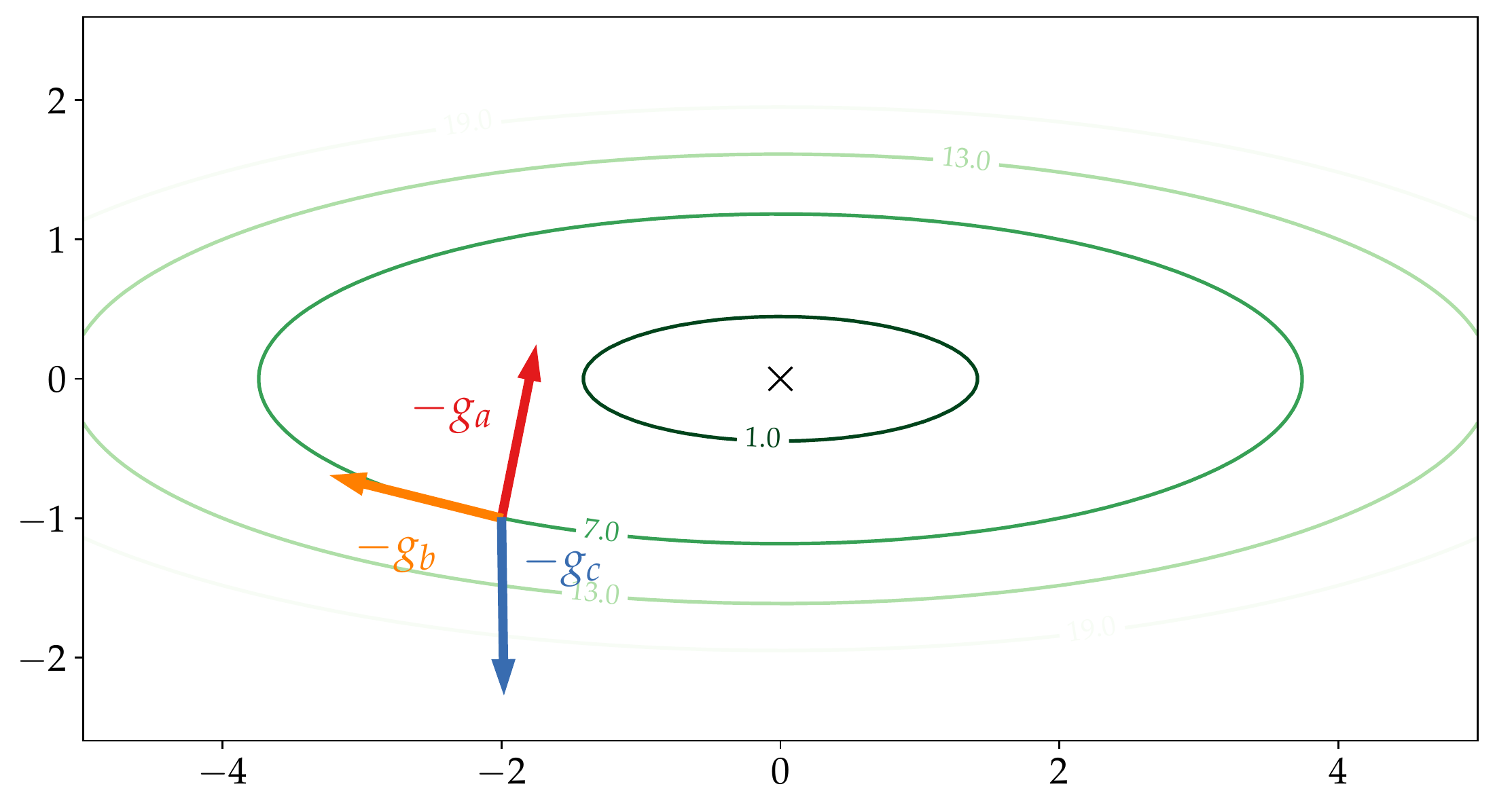}}\hspace{0.3cm}
     \subfigure[Now the combined gradient $g_a + g_b$ reduces the loss $\mathcal{L}_a$ more so than $g_a + g_c$.]{\includegraphics[width=0.48\textwidth]{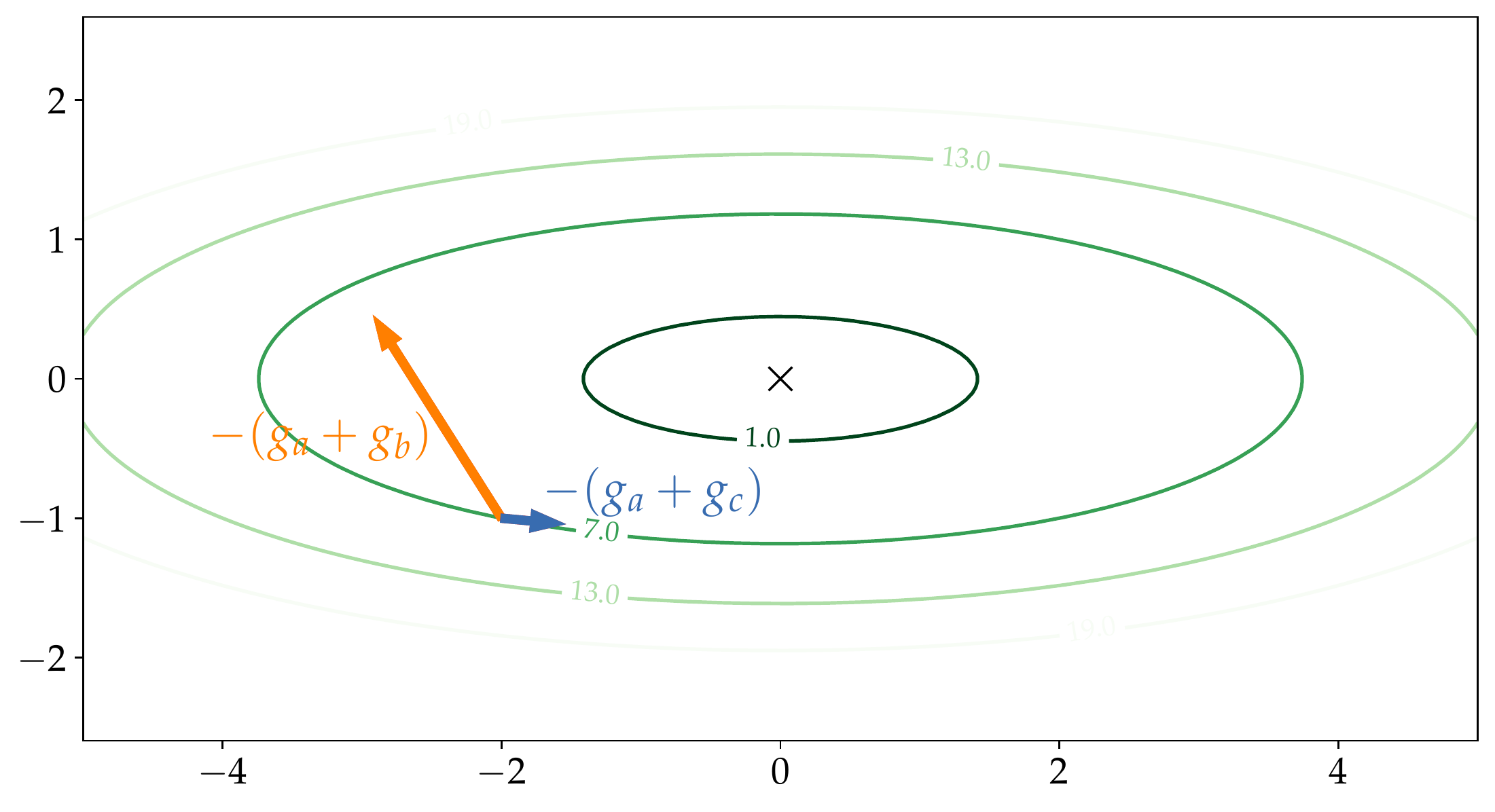}}\hfill
    \end{center}
    \vspace{-0.3cm}
    \caption{A counterexample on a quadratic loss function where the task grouping based on inter-task similarity results in an inferior performance.}
    \label{fig:counterexample}
\end{figure*}

\subsection{Quadratic Counterexample}
We provide a counterexample in a multi-task setup using a quadratic loss function. The loss function for the task $a$ is defined as $\mathcal{L}_a(x_1, x_2) = \sfrac{1}{2}\,(x_1^2 + 10\,x_2^2)$. For this loss function, $\alpha=1$ and $\beta=10$. Also, the global minimum of the loss is at $(x_1, x_2) = (0, 0)$. We set the initial point to $(x_1, x_2) = (-2, -1)$ with a loss value of $7$. Figure~\ref{fig:counterexample}(a) shows the level sets of the loss function $\mathcal{L}_a$, along with the negative task gradient $-g_a$.  We also plot two additional negative gradients, namely $-g_b$ and $-g_c$, belonging to the \ehsan{auxiliary?} tasks $b$ and $c$, respectively. The auxiliary task gradients $g_b$ and $g_c$ are chosen to have the same gradient norm as $g_a$, but pointed along the vectors $[8, -2]$ and $[-12, 2]$, respectively. Using a learning rate of $\eta = 0.09 < \sfrac{1}{\beta}$, the gradient of task $b$ at this point reduces the value of the loss $\mathcal{L}_a$ more so than the gradient of task $c$. Specifically, the value of the loss after a gradient step using $g_b$ amounts to $6.96$ whereas using $g_c$, we obtain a loss value of $6.98$. However, this ordering does not hold when combining the gradients, i.e. using the combined gradient $g_a + g_b$ results in a loss value $6.09$ of whereas the combined gradient $g_a + g_c$ yields a loss of $6.07$ (Figure~\ref{fig:counterexample}(b)).

Alternatively, in Figure~\ref{fig:counterexample}(c) we consider a gradient $g_c$ along $[-0.2, 15]$ which also satisfies the second condition of Proposition~\ref{prop:main}, namely $\cos(a, c) \leq \frac{\eta}{4}\frac{\sfrac{\beta}{\alpha}}{\sfrac{\beta}{\alpha} - 1}$ (while $\Vert g_a\Vert = \Vert g_c\Vert$). For this choice of $g_c$ and as a result of Proposition~\ref{prop:main}, the fact that $g_b$ reduces the loss $\mathcal{L}_a$ more so than $g_c$ ($6.96$ vs $7.96$) implies $g_a + g_b$ also reduces the loss more so than $g_a + g_c$ ($6.09$ vs $6.98$, see Figure~\ref{fig:counterexample}(d)).

\section{Additional Experimental Results}
We provide additional experimental results to supplement our empirical analysis of \method in \Secref{sec:expts}. In particular, we evaluate task groupings without a fixed inference-time latency constraint, supply extra information related to the CelebA and Taskonomy analyses, and offer additional discussion on the ablation studies presented in \Secref{sec:ablations}.

\begin{figure*}[t!]
    \centering
    \begin{center}
    \includegraphics[width=0.48\textwidth]{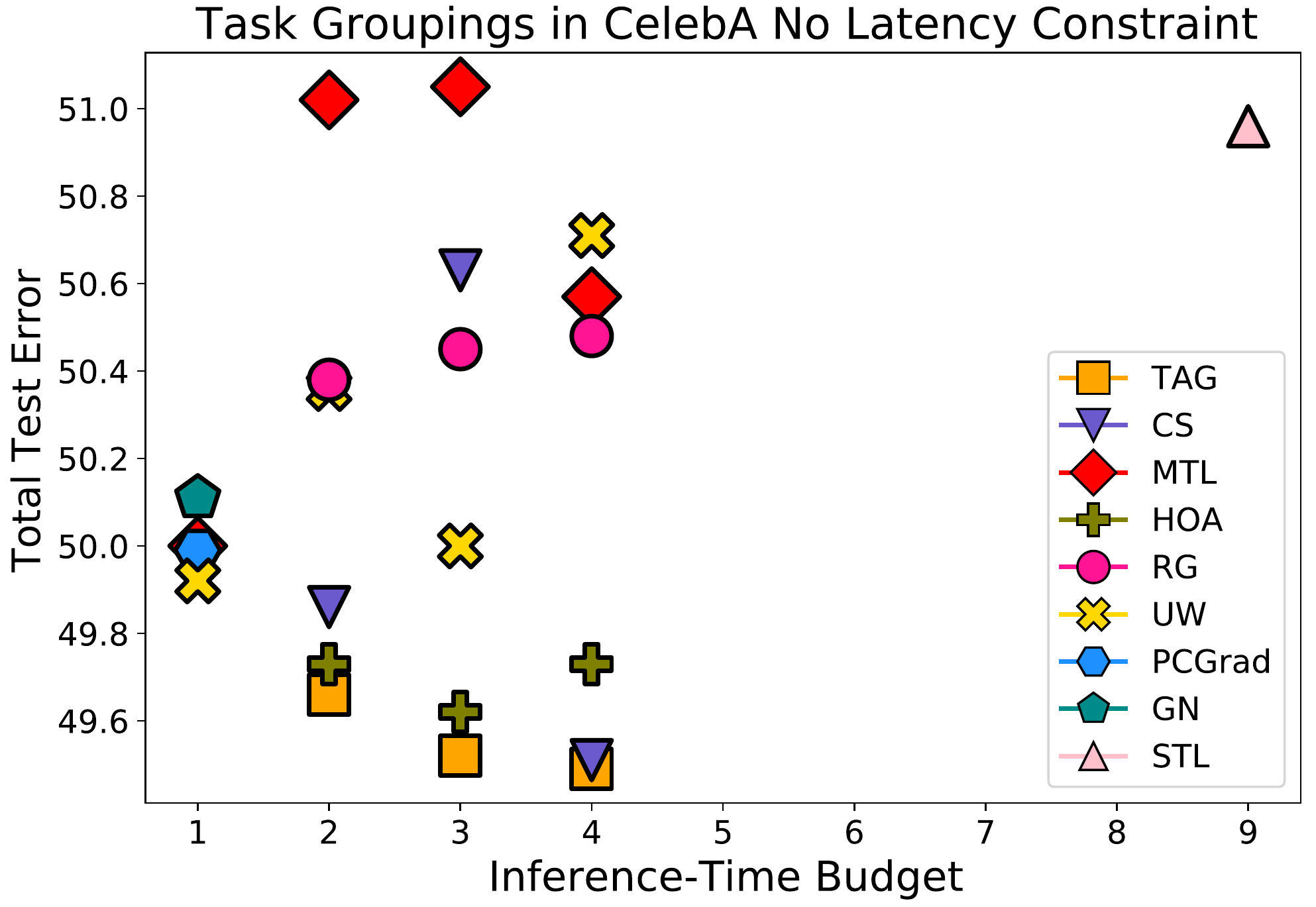}\hspace{0.3cm}
     \includegraphics[width=0.48\textwidth]{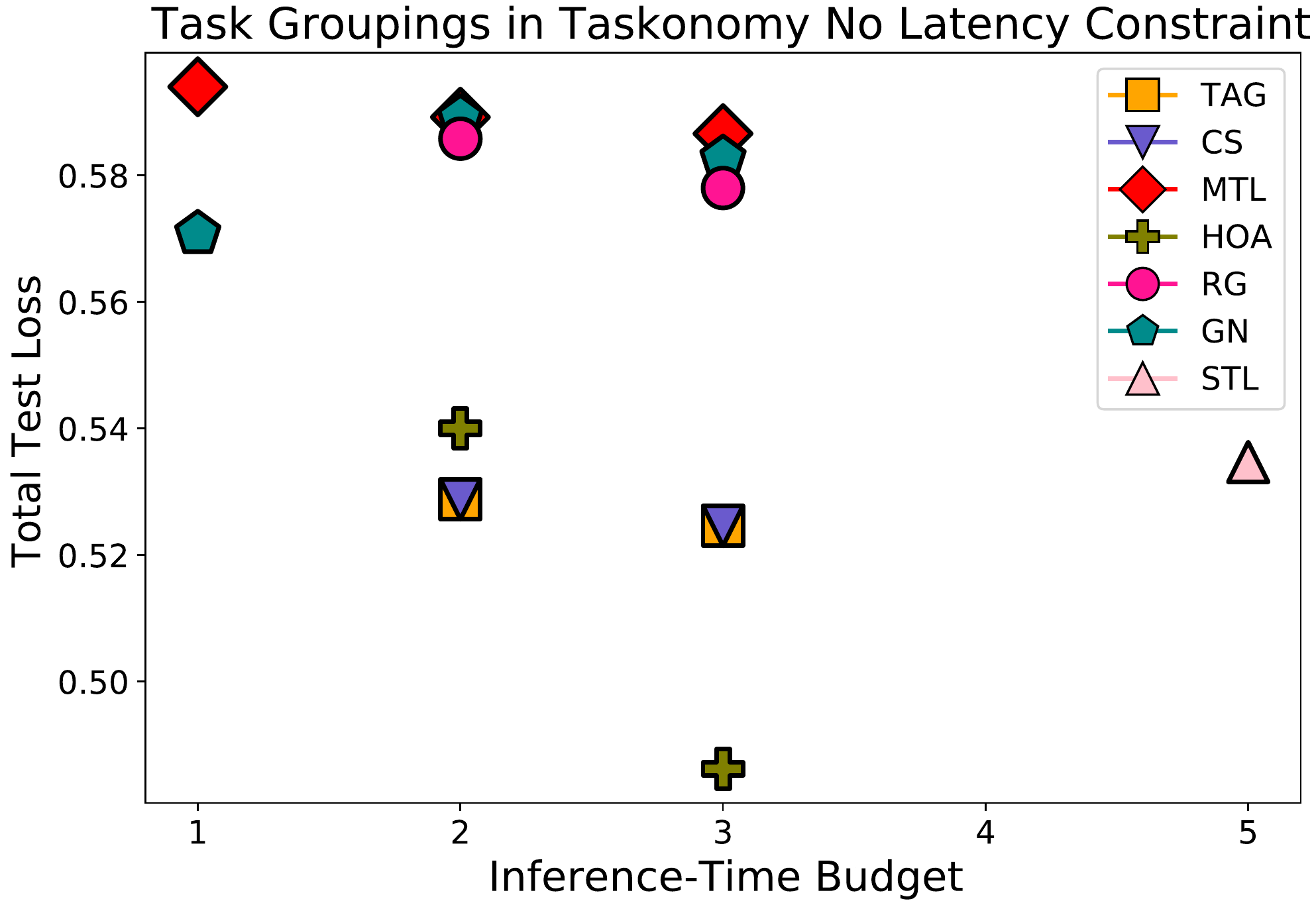}\hfill
    \end{center}
    \caption{(Left) average classification error for 2, 3, and 4-split task groupings for the subset of 9 tasks in CelebA. (Right) total test loss for 2 and 3-split task groupings for the subset of 5 tasks in Taskonomy. The x-axis is the inference-time memory budget relative to the number of parameters in the baseline MTL model from \Secref{sec:expts}.}
    \label{fig:no_latency_task_groupings}
\end{figure*}

\subsection{Task Grouping Evaluation Without Latency Constraint}
We analyze the effect of removing the inference-time latency constraint from our problem definition. Without this limitation, we can scale up the size of our multi-task learning baselines to equal the total number of parameters used in each task grouping. Similar to \cite{standley2019tasks}, we choose to scale capacity by increasing the number of channels in each convolutional layer. A 2-splits task grouping would then correspond with a multi-task model with 2 times the number of channels in each conv layer. On the CelebA dataset, running PCGrad with double the number of channels resulted in an out-of-memory (OOM) error on a 16 GB TeslaV100 GPU. When implemented with distributed training, we received a runtime error. As a result, we do not include PCGrad results in this analysis, and this particular method would also likely surpass typical computational budget constraints due to its high memory usage.

Our results are summarized in \Figref{fig:no_latency_task_groupings}. Similar to the findings presented in \Figref{fig:task_groupings} of \Secref{sec:expts}, the task grouping approaches continue to outperform simply training all tasks together, as well as optimization augmentations like Uncertainty Weights and GradNorm. For CelebA, increasing the number of channels in the layers of our ResNet model actually reduces performance, indicating our model is already at near-optimal capacity for this dataset. This is reasonable given our tuning of the CelebA model architecture and hyperparameters to maximize MTL performance. For Taskonomy, and similar to the results presented in \citep{standley2019tasks}, we find scaling multi-task model capacity does not meaningfully improve MTL or GradNorm performance. 

\subsection{Additional CelebA Task Grouping Results}
In this section, we provide further detail into our experimental results for the CelebA dataset. We present the raw values used to create \Figref{fig:task_groupings} (left) and \Figref{fig:no_latency_task_groupings} (left) as well as underpin the \Secref{sec:ablations} ablation studies in \Tableref{table:baseline_methods_celeba}, \Tableref{tab:two_split_celeba}, \Tableref{tab:three_split_celeba}, \Tableref{tab:four_split_celeba}, and \Tableref{tab:capacity_celeba}. All quantities are averaged across three independent runs, and we report mean and standard error. A task within a group is highlighted in bold when this task is chosen to ``serve'' from this assigned task grouping. Duplicate tasks that are not bolded are only used to assist in the training other tasks.

\begin{table*}[t!]
\centering
\resizebox{1.0\linewidth}{!}{%
\begin{tabular}{l|c|c|c|c|c|c|c|c|c|c}
\toprule
\multicolumn{11}{c}{CelebA Baseline Methods}\\
\midrule
Method & a1 & a2 & a3 & a4 & a5 & a6 & a7 & a8 & a9 & Total Error \\ 
\midrule 
\multirow{1}{*}{MTL} & $6.54 \pm 0.026$  & $11.09 \pm 0.009$  & $4.19 \pm 0.017$  & $12.59 \pm 0.085$  & $2.60 \pm 0.0.003$  & $2.73 \pm 0.128$  & $4.81 \pm 0.010$  & $4.74 \pm 0.012$  & $0.70 \pm 0.007$  & $50.00$ \\ 
\midrule
\multirow{1}{*}{STL} & $6.56 \pm 0.003$ & $11.37 \pm 0.009$  & $4.19 \pm 0.025$  & $12.58 \pm 0.102$  & $2.69 \pm 0.017$  & $3.06 \pm 0.010$  & $4.97 \pm 0.006$  & $4.83 \pm 0.010$  & $0.71 \pm 0.007$  & $49.99$  \\ 
\midrule
\multirow{1}{*}{UW \citep{uncertainty}} & $6.51 \pm 0.038$ & $11.43 \pm 0.034$  & $4.18 \pm 0.015$ & $11.91 \pm 0.132$  & $2.50 \pm 0.028$  & $2.95 \pm 0.010$  & $4.81 \pm 0.026$  & $4.89 \pm 0.028$  & $0.74 \pm 0.007$  & $49.92$  \\ 
\midrule
\multirow{1}{*}{GradNorm \citep{gradnorm}} & $6.44 \pm 0.033$  & $11.09 \pm 0.0.021$  & $4.01 \pm 0.051$  & $12.38 \pm 0.097$  & $2.65 \pm 0.022$  & $2.96 \pm 0.017$  & $4.89 \pm 0.015$  & $4.87 \pm 0.003$  & $0.81 \pm 0.009$  & $50.11$ \\ 
\midrule
\multirow{1}{*}{PCGrad \citep{pcgrad}} & $6.57 \pm 0.015$ & $10.95 \pm 0.020$  & $4.04 \pm 0.015$  & $12.73 \pm 0.033$  & $2.67 \pm 0.013$  & $2.87 \pm 0.021$  & $4.76 \pm 0.006$  & $4.76 \pm 0.015$  & $0.64 \pm 0.010$  & $49.99$  \\ 
\bottomrule
\end{tabular}
}
\vspace{-0.2cm}
\caption{Mean and standard error for benchmark methods run on CelebA.}
\label{table:baseline_methods_celeba}
\end{table*}

\begin{table*}[t!]
\centering
\resizebox{1.0\linewidth}{!}{%
\begin{tabular}{l|c|c|c|c|c|c|c|c|c|c|c}
\toprule
\multicolumn{12}{c}{Inference Time Budget = 2 Splits Task Groupings}\\
\midrule
Method & Splits & a1 & a2 & a3 & a4 & a5 & a6 & a7 & a8 & a9 & Total Error \\ 
\midrule 
\multirow{2}{*}{CS} & group 1 & ---  & ---  & ---  & ---  & $\mathbf{2.60 \pm 0.010}$  & $\mathbf{3.05 \pm 0.007}$  & ---  & ---  & ---  & \multirow{2}{*}{49.86}  \\ 
& group 2 & $\mathbf{6.55 \pm 0.009}$  & $\mathbf{11.19 \pm 0.020}$  & $\mathbf{4.10 \pm 0.012}$  & $\mathbf{12.02 \pm 0.029}$  & $2.57 \pm 0.007$  & ---  & $\mathbf{4.78 \pm 0.015}$  & $\mathbf{4.85 \pm 0.006}$  & $\mathbf{0.73 \pm 0.010}$  \\ 
\midrule
\multirow{2}{*}{HOA} & group 1 & ---  & ---  & ---  & ---  & $\mathbf{2.59 \pm 0.006}$  & ---  & $\mathbf{4.71 \pm 0.000}$  & ---  & ---  & \multirow{2}{*}{49.73}  \\ 
& group 2 & $\mathbf{6.49 \pm 0.037}$  & $\mathbf{11.34 \pm 0.022}$  & $\mathbf{4.25 \pm 0.052}$  & $\mathbf{11.76 \pm 0.090}$  & ---  & $\mathbf{3.00 \pm 0.022}$  & ---  & $\mathbf{4.91 \pm 0.059}$  & $\mathbf{0.69 \pm 0.009}$  \\ 
\midrule
\multirow{2}{*}{\method} & group 1 & $\mathbf{6.39 \pm 0.006}$  & ---  & ---  & ---  & ---  & ---  & $\mathbf{4.79 \pm 0.006}$  & ---  & ---  & \multirow{2}{*}{49.66}  \\ 
& group 2 & ---  & $\mathbf{11.10 \pm 0.065}$  & $\mathbf{4.16 \pm 0.003}$  & $\mathbf{12.29 \pm 0.202}$  & $\mathbf{2.55 \pm 0.025}$  & $\mathbf{2.94 \pm 0.015}$  & ---  & $\mathbf{4.69 \pm 0.026}$  & $\mathbf{0.74 \pm 0.013}$  \\ 
\midrule 
\multirow{2}{*}{Optimal} & group 1 & $\mathbf{6.60 \pm 0.009}$  & $11.21 \pm 0.017$  & $4.40 \pm 0.007$  & $11.91 \pm 0.051$  & $\mathbf{2.60 \pm 0.009}$  & $\mathbf{2.87 \pm 0.003}$  & $\mathbf{4.81 \pm 0.015}$  & $\mathbf{4.58 \pm 0.009}$  & ---  & \multirow{2}{*}{$49.37$}  \\ 
& group 2 & ---  & $\mathbf{11.14 \pm 0.044}$  & $\mathbf{4.03 \pm 0.012}$  & $\mathbf{11.90 \pm 0.020}$  & ---  & ---  & ---  & ---  & $\mathbf{0.75 \pm 0.018}$  \\ 
\bottomrule
\end{tabular}
}
\vspace{-0.2cm}
\caption{Two-split task groupings in CelebA. We report mean and standard error. }
\label{tab:two_split_celeba}
\end{table*}

\begin{table*}[t!]
\centering
\resizebox{1.0\linewidth}{!}{%
\begin{tabular}{l|c|c|c|c|c|c|c|c|c|c|c}
\toprule
\multicolumn{12}{c}{Inference Time Budget = 3 Splits Task Groupings}\\
\midrule
Method & Splits & a1 & a2 & a3 & a4 & a5 & a6 & a7 & a8 & a9 & Total Error \\ 
\midrule 
\multirow{3}{*}{CS} & group 1 & ---  & ---  & ---  & ---  & $\mathbf{2.60 \pm 0.010}$  & $\mathbf{3.05 \pm 0.007}$  & ---  & ---  & ---  & \multirow{3}{*}{50.63}  \\ 
& group 2 & $\mathbf{6.39 \pm 0.006}$  & ---  & ---  & ---  & ---  & ---  & $\mathbf{4.79 \pm 0.006}$  & ---  & --- \\
& group 3 & ---  & $\mathbf{11.20 \pm 0.031}$  & $\mathbf{4.09 \pm 0.010}$  & $\mathbf{12.97 \pm 0.015}$  & --- & ---  & ---  & $\mathbf{4.82 \pm 0.018}$  & $\mathbf{0.73 \pm 0.003}$  \\ 
\midrule
\multirow{3}{*}{HOA} & group 1 & ---  & ---  & ---  & ---  & $\mathbf{2.59 \pm 0.006}$  & ---  & $\mathbf{4.71 \pm 0.000}$  & ---  & ---  & \multirow{3}{*}{49.73}  \\ 
& group 2 & ---  & $\mathbf{11.08 \pm 0.017}$ & ---   & $\mathbf{12.20 \pm 0.067}$  & ---  & ---  & ---  & ---  & --- \\
& group 3 & $\mathbf{6.55 \pm 0.012}$  & ---  & $\mathbf{4.15 \pm 0.013}$  & ---  & $2.70 \pm 0.012$  & $\mathbf{2.84 \pm 0.003}$  & $4.90 \pm 0.015$  & $\mathbf{4.76 \pm 0.015}$  & $\mathbf{0.75 \pm 0.013}$  \\ 
\midrule
\multirow{3}{*}{\method} & group 1 & $\mathbf{6.39 \pm 0.006}$  & ---  & ---  & ---  & ---  & ---  & $\mathbf{4.79 \pm 0.006}$  & ---  & ---  & \multirow{3}{*}{49.52}  \\ 
& group 2 & ---  & $\mathbf{11.08 \pm 0.017}$ & ---   & $\mathbf{12.20 \pm 0.067}$  & ---  & ---  & ---  & ---  & --- \\
& group 3 & ---  & $11.11 \pm 0.127$ & $\mathbf{4.08 \pm 0.006}$  & --- & $\mathbf{2.52 \pm 0.025}$  & $\mathbf{2.96 \pm 0.050}$  & $4.98 \pm 0.058$  & $\mathbf{4.73 \pm 0.015}$  & $\mathbf{0.78 \pm 0.009}$  \\ 
\midrule 
\multirow{3}{*}{Optimal} & group 1 & $6.34 \pm 0.067$  & --- & ---  & --- & $\mathbf{2.57 \pm 0.012}$  & $\mathbf{2.91 \pm 0.019}$  & $4.74 \pm 0.031$  & $\mathbf{4.70 \pm 0.017}$  & $0.78 \pm 0.012$  & \multirow{3}{*}{$48.92$}  \\ 
& group 2 & ---  & $\mathbf{11.14 \pm 0.044}$  & $\mathbf{4.03 \pm 0.012}$  & $\mathbf{11.90 \pm 0.020}$  & ---  & ---  & ---  & ---  & $\mathbf{0.75 \pm 0.018}$ \\ 
& group 3 & $\mathbf{6.25 \pm 0.045}$  & ---  & ---  & ---  & ---  & ---  & $\mathbf{4.67 \pm 0.009}$  & ---  & $0.71 \pm 0.006$  \\ 
\bottomrule
\end{tabular}
}
\vspace{-0.2cm}
\caption{Three-split task groupings in CelebA. We report mean and standard error.}
\label{tab:three_split_celeba}
\end{table*}

\begin{table*}[t!]
\centering
\resizebox{1.0\linewidth}{!}{%
\begin{tabular}{l|c|c|c|c|c|c|c|c|c|c|c}
\toprule
\multicolumn{12}{c}{Inference Time Budget = 4 Splits Task Groupings}\\
\midrule
Method & Splits & a1 & a2 & a3 & a4 & a5 & a6 & a7 & a8 & a9 & Total Error \\ 
\midrule 
\multirow{4}{*}{CS} & group 1 & ---  & ---  & ---  & ---  & $\mathbf{2.60 \pm 0.010}$  & $\mathbf{3.05 \pm 0.007}$  & ---  & ---  & ---  & \multirow{4}{*}{49.51}  \\ 
& group 2 & $\mathbf{6.39 \pm 0.006}$  & ---  & ---  & ---  & ---  & ---  & $\mathbf{4.79 \pm 0.006}$  & ---  & --- \\
& group 3 & ---  & $\mathbf{11.08 \pm 0.017}$ & ---   & $\mathbf{12.20 \pm 0.067}$  & ---  & ---  & ---  & ---  & --- \\
& group 4 & ---  & $11.00 \pm 0.013$  & $\mathbf{4.07 \pm 0.012}$  & $12.40 \pm 0.045$ & --- & ---  & $4.96 \pm 0.031$ & $\mathbf{4.62 \pm 0.009}$  & $\mathbf{0.72 \pm 0.007}$  \\ 
\midrule
\multirow{4}{*}{HOA} & group 1 & ---  & ---  & ---  & ---  & $\mathbf{2.59 \pm 0.006}$  & ---  & $\mathbf{4.71 \pm 0.000}$  & ---  & ---  & \multirow{4}{*}{49.73}  \\ 
& group 2 & ---  & $\mathbf{11.08 \pm 0.017}$ & ---   & $\mathbf{12.20 \pm 0.067}$  & ---  & ---  & ---  & ---  & --- \\
& group 3 & $\mathbf{6.63 \pm 0.024}$  & ---  & ---  & ---  & $\mathbf{2.70 \pm 0.006}$ & ---  & ---  & ---  & --- \\
& group 4 & ---  & $11.25 \pm 0.049$  & $\mathbf{4.15 \pm 0.024}$  & --- & $2.58 \pm 0.023$ & $\mathbf{2.88 \pm 0.045}$ & $\mathbf{4.90 \pm 0.065}$ & $\mathbf{4.74 \pm 0.026}$ & $\mathbf{0.76 \pm 0.012}$  \\ 
\midrule
\multirow{4}{*}{\method} & group 1 & $\mathbf{6.39 \pm 0.006}$  & ---  & ---  & ---  & ---  & ---  & $\mathbf{4.79 \pm 0.006}$  & ---  & ---  &  \multirow{4}{*}{49.49}  \\ 
& group 2 & ---  & $\mathbf{11.08 \pm 0.017}$ & ---   & $\mathbf{12.20 \pm 0.067}$  & ---  & ---  & ---  & ---  & --- \\
& group 3 & ---  & --- & ---   & ---  & $\mathbf{2.63 \pm 0.003}$  & $\mathbf{2.99 \pm 0.015}$  & $4.75 \pm 0.012$  & ---  & --- \\
& group 4 & ---  & $11.00 \pm 0.013$  & $\mathbf{4.07 \pm 0.012}$  & $12.40 \pm 0.045$ & --- & ---  & $4.96 \pm 0.031$ & $\mathbf{4.62 \pm 0.009}$  & $\mathbf{0.72 \pm 0.007}$  \\ 
\midrule 
\multirow{4}{*}{Optimal} & group 1 & --- & $11.20 \pm 0.049$ & $4.09 \pm 0.022$  & $\mathbf{11.77 \pm 0.197}$ & ---  & ---  & ---  & ---  & ---  & \multirow{4}{*}{$48.57$}  \\
& group 2 & ---  & ---  & $\mathbf{4.00 \pm 0.023}$  & ---  & ---  & $\mathbf{2.90 \pm 0.020}$  & $4.85 \pm 0.041$  & ---  & $\mathbf{0.75 \pm 0.010}$  \\ 
& group 3 & $\mathbf{6.25 \pm 0.045}$  & ---  & ---  & ---  & ---  & ---  & $\mathbf{4.67 \pm 0.009}$  & ---  & $0.71 \pm 0.006$  \\ 
& group 4 & ---  & $\mathbf{10.96 \pm 0.045}$  & ---  & $12.74 \pm 0.136$  & $\mathbf{2.56 \pm 0.031}$  & ---  & ---  & $\mathbf{4.75 \pm 0.072}$  & $0.77 \pm 0.022$  \\ 
\bottomrule
\end{tabular}
}
\vspace{-0.2cm}
\caption{Four-split task groupings in CelebA. We report mean and standard error.}
\label{tab:four_split_celeba}
\end{table*}

\begin{table*}[t!]
\centering
\resizebox{1.0\linewidth}{!}{%
\begin{tabular}{l|c|c|c|c|c|c|c|c|c|c|c}
\toprule
\multicolumn{12}{c}{CelebA High Capacity Performance}\\
\midrule
Method & Capacity & a1 & a2 & a3 & a4 & a5 & a6 & a7 & a8 & a9 & Total Error \\ 
\midrule 
\multirow{3}{*}{MTL} & 2x & $6.42 \pm 0.007$  & $10.76 \pm 0.070$  & $4.35 \pm 0.010$  & $13.01 \pm 0.152$ & $2.66 \pm 0.0.015$  & $3.05 \pm 0.009$  & $4.76 \pm 0.038$  & $5.20 \pm 0.047$  & $0.80 \pm 0.007$  & $51.02$ \\ 
& 3x & $6.43 \pm 0.032$ & $11.65 \pm 0.128$ & $4.21 \pm 0.060$ & $12.69 \pm 0.918$ & $2.61 \pm 0.029$ & $2.94 \pm 0.009$ & $4.87 \pm 0.067$ & $4.86 \pm 0.064$ & $0.80 \pm 0.034$ & $51.05$ \\
& 4x & $6.91 \pm 0.055$ & $11.23 \pm 0.060$ & $4.31 \pm 0.032$ & $12.51 \pm 0.104$ & $2.57 \pm 0.021$ & $3.01 \pm 0.009$ & $4.59 \pm 0.036$ & $4.81 \pm 0.018$ & $0.64 \pm 0.012$ & $50.57$\\
\midrule
\multirow{3}{*}{UW} & 2x & $6.40 \pm 0.006$  & $11.01 \pm 0.023$  & $4.36 \pm 0.015$  & $12.54 \pm 0.022$ & $2.63 \pm 0.0.013$  & $3.03 \pm 0.003$  & $4.68 \pm 0.007$  & $5.00 \pm 0.015$  & $0.71 \pm 0.012$  & $50.36$ \\ 
& 3x & $6.36 \pm 0.050$ & $11.32 \pm 0.063$ & $4.27 \pm 0.020$ & $12.12 \pm 0.019$ & $2.51 \pm 0.012$ & $2.94 \pm 0.037$ & $4.93 \pm 0.045$ & $4.76 \pm 0.037$ & $0.79 \pm 0.006$ & $50.00$ \\
& 4x & $6.70 \pm 0.146$ & $11.49 \pm 0.107$ & $4.37 \pm 0.009$ & $12.35 \pm 0.120$ & $2.63 \pm 0.021$ & $3.07 \pm 0.057$ & $4.61 \pm 0.023$ & $4.83 \pm 0.013$ & $0.64 \pm 0.018$ & $50.71$\\
\midrule
\multirow{3}{*}{GN} & 2x & $6.38 \pm 0.038$  & $10.98 \pm 0.031$  & $4.13 \pm 0.029$  & $12.57 \pm 0.095$ & $2.71 \pm 0.0.000$  & $3.00 \pm 0.043$  & $4.73 \pm 0.009$  & $5.21 \pm 0.146$  & $0.86 \pm 0.019$  & $50.57$ \\ 
& 3x & $6.38 \pm 0.021$ & $11.67 \pm 0.217$ & $4.35 \pm 0.075$ & $12.35 \pm 0.248$ & $2.61 \pm 0.020$ & $3.00 \pm 0.089$ & $4.93 \pm 0.056$ & $4.86 \pm 0.006$ & $0.85 \pm 0.037$ & $51.00$ \\
& 4x & $6.56 \pm 0.063$ & $11.49 \pm 0.261$ & $4.33 \pm 0.030$ & $13.29 \pm 0.646$ & $2.57 \pm 0.052$ & $3.07 \pm 0.067$ & $4.82 \pm 0.087$ & $4.82 \pm 0.057$ & $0.78 \pm 0.003$ & $51.74$\\
\bottomrule
\end{tabular}
}
\vspace{-0.2cm}
\caption{Mean and standard error for CelebA high capacity experiments.}
\label{tab:capacity_celeba}
\end{table*}

\subsection{Additional Taskonomy Task Grouping Results}
We also report the raw scores used in our empirical analysis of Taskonomy to create \Figref{fig:task_groupings} (right) and \Figref{fig:no_latency_task_groupings} (right) in \Tableref{tab:benchmark_taskonomy}, \Tableref{tab:two_split_taskonomy}, \Tableref{tab:three_split_taskonomy}, and \Tableref{tab:taskonomy_high_capacity}. Given the size of the Taskonomy dataset, and computational cost associated with running a single model (approximately 146 Tesla V100 GPU hours for the MTL baseline), we evaluate only a single run. 

\begin{table*}[t!]
\centering
\resizebox{0.7\linewidth}{!}{%
\begin{tabular}{l|c|c|c|c|c|c|c}
\toprule
\multicolumn{7}{c}{Baseline Methods on Taskonomy}\\
\midrule
Method & s & d & n & t & k & Total Test Loss \\ 
\midrule 
MTL & 0.0586  & 0.2879  & 0.1076  & 0.0428  & 0.1079 & 0.5940 \\ 
\midrule
STL & 0.0509 & 0.2616 & 0.0975 & 0.0337 & 0.0910 & 0.5347 \\
\midrule
GN & 0.0542 & 0.2818 & 0.1011 & 0.0305 & 0.1032 & 0.5708 \\
\bottomrule
\end{tabular}
}
\vspace{-0.2cm}
\caption{Baseline methods in Taskonomy.}
\label{tab:benchmark_taskonomy}
\end{table*}

\begin{table*}[t!]
\centering
\resizebox{0.7\linewidth}{!}{%
\begin{tabular}{l|c|c|c|c|c|c|c}
\toprule
\multicolumn{8}{c}{Inference Time Budget = 2 Splits Task Groupings}\\
\midrule
Method & Splits & s & d & n & t & k & Total Test Loss \\ 
\midrule 
\multirow{2}{*}{CS} & group 1 & \textbf{0.532}  & \textbf{0.2527}  & \textbf{0.1064}  & ---  & --- & \multirow{2}{*}{0.5288} \\ 
& group 2 & --- & --- & --- & \textbf{0.0232} & \textbf{0.0933} \\
\midrule
\multirow{2}{*}{HOA} & group 1 & \textbf{0.0603}  & \textbf{0.2725}  & \textbf{0.1075}  & \textbf{0.0429}  & --- & \multirow{2}{*}{0.5400} \\ 
& group 2 & --- & --- & 0.1110 & --- & \textbf{0.0568} \\
\midrule
\multirow{2}{*}{\method} & group 1 & \textbf{0.532}  & \textbf{0.2527}  & \textbf{0.1064}  & ---  & --- & \multirow{2}{*}{0.5288} \\ 
& group 2 & --- & --- & --- & \textbf{0.0232} & \textbf{0.0933} \\
\midrule
\multirow{2}{*}{Optimal} & group 1 & \textbf{0.0532}  & \textbf{0.2527}  & \textbf{0.1064}  & ---  & --- & \multirow{3}{*}{0.5176} \\ 
& group 2 & --- & --- & 0.1096 & \textbf{0.0271} & \textbf{0.0750} \\
\bottomrule
\end{tabular}
}
\vspace{-0.2cm}
\caption{Two-split task groupings in Taskonomy.}
\label{tab:two_split_taskonomy}
\end{table*}

\begin{table*}[t!]
\centering
\resizebox{0.7\linewidth}{!}{%
\begin{tabular}{l|c|c|c|c|c|c|c}
\toprule
\multicolumn{8}{c}{Inference Time Budget = 3 Splits Task Groupings}\\
\midrule
Method & Splits & s & d & n & t & k & Total Test Loss \\ 
\midrule 
\multirow{3}{*}{CS} & group 1 & \textbf{0.528}  & 0.2636  & ---  & ---  & --- & \multirow{3}{*}{0.5246} \\ 
& group 2 & --- & --- & --- & \textbf{0.0232} & \textbf{0.0933} \\
& group 3 & --- & \textbf{0.2551} & 0.1002 & --- & --- \\
\midrule
\multirow{3}{*}{HOA} & group 1 & \textbf{0.0532}  & \textbf{0.2527}  & \textbf{0.1064}  & ---  & --- & \multirow{3}{*}{0.4923} \\ 
& group 2 & --- & --- & 0.1110 & --- & \textbf{0.0568} \\
& group 3 & --- & --- & --- & \textbf{0.0232} & 0.0933 \\
\midrule
\multirow{3}{*}{\method} & group 1 & \textbf{0.528}  & 0.2636  & ---  & ---  & --- & \multirow{3}{*}{0.5246} \\ 
& group 2 & --- & --- & --- & \textbf{0.0232} & \textbf{0.0933} \\
& group 3 & --- & \textbf{0.2551} & 0.1002 & --- & --- \\
\midrule
\multirow{3}{*}{Optimal} & group 1 & 0.0532  & \textbf{0.2527}  & 0.1064  & ---  & --- & \multirow{3}{*}{0.4862} \\ 
& group 2 & \textbf{0.0500} & --- & \textbf{0.1025} & \textbf{0.0242} & --- \\
& group 3 & --- & --- & 0.1110 & --- & \textbf{0.0568} \\
\midrule
\bottomrule
\end{tabular}
}
\vspace{-0.2cm}
\caption{Three-split task groupings in Taskonomy.}
\label{tab:three_split_taskonomy}
\end{table*}

\begin{table*}[t!]
\centering
\resizebox{0.7\linewidth}{!}{%
\begin{tabular}{l|c|c|c|c|c|c|c}
\toprule
\multicolumn{7}{c}{Taskonomy High Capacity Performance}\\
\midrule
Method & s & d & n & t & k & Total Test Loss \\ 
\midrule 
MTL (2x) & 0.00536  & 0.2664  & 0.1078  & 0.0441  & 0.11.73 & 0.5892 \\ 
\midrule
MTL (3x) & 0.0538 & 0.2891 & 0.1090 & 0.0363 & 0.0984 & 0.5866 \\
\midrule
GN (2x) & 0.0552 & 0.2977 & 0.1013 & 0.0289 & 0.1060 & 0.5891 \\
\midrule
GN (3x) & 0.0570 & 0.2806 & 0.1044 & 0.0401 & 0.1006 & 0.5827 \\
\bottomrule
\end{tabular}
}
\vspace{-0.2cm}
\caption{Performance of high capacity models on Taskonomy.}
\label{tab:taskonomy_high_capacity}
\end{table*}
\begin{figure*}[t!]
    \vspace{-0.2cm}
    \centering
    \begin{center}
    \includegraphics[width=0.48\textwidth]{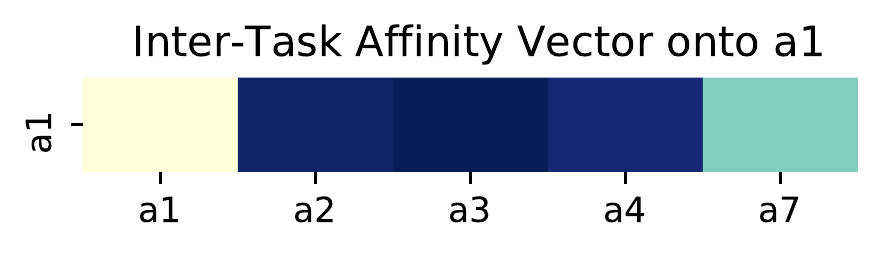}\hspace{0.1cm}
     \includegraphics[width=0.48\textwidth]{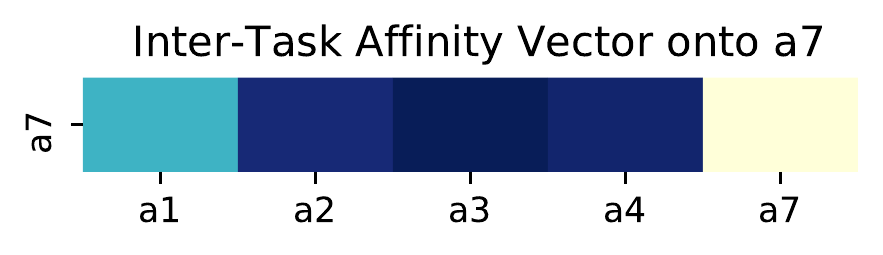}\hfill
     \includegraphics[width=0.48\textwidth]{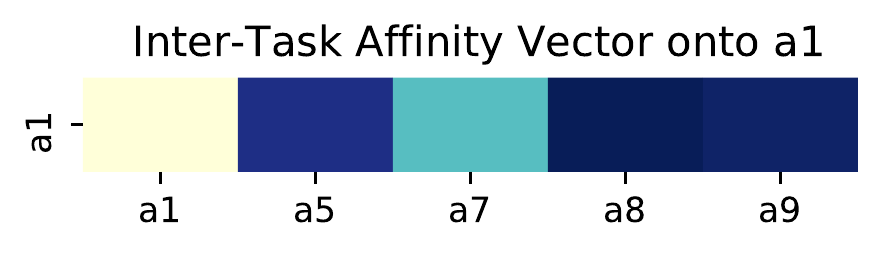}\hspace{0.1cm}
     \includegraphics[width=0.48\textwidth]{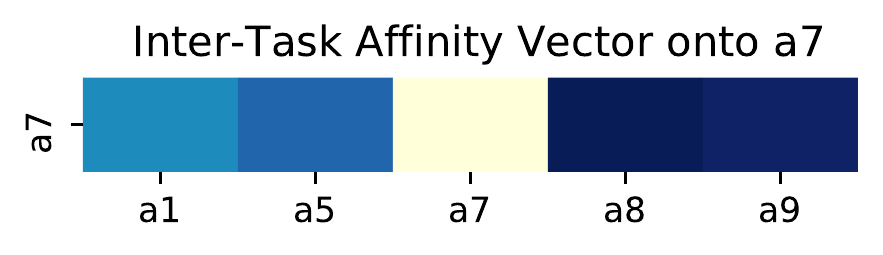}\hfill
    \end{center}
    \vspace{-0.4cm}
    \caption{(Left) inter-task affinity onto task a1 when trained with (top) \{a1, a2, a3, a4, a7\} and (bottom) \{a1, a5, a7, a8, a9\}. Note, the inter-task affinity of a7 onto a1 is maintained across both groups. (Right) inter-task affinity onto task a7 when trained with (top) \{a1, a2, a3, a4, a7\} and (bottom) \{a1, a5, a7, a8, a9\}. Note, the inter-task affinity of a1 onto a7 is maintained across both groups. Lighter coloring signify higher inter-task affinities.}
    \label{fig:task_sets}
    \vspace{-0.50cm}
\end{figure*}

\subsection{Supplementary Information on Ablation Studies}
In this section, we expand on our analysis into whether inter-task affinity correlates with optimal task groupings. Notably, we offer additional insight into the failure case of \method as well as a limitation that arises from computing the change in loss on the training dataset rather than the validation dataset. Finally, we provide an additional dimension of analysis into the robustness of inter-task affinity measurements. In particular, \textit{is the inter-task affinity between two tasks maintained even in the presence of a different set of tasks?}

\begin{wrapfigure}{R}{0.5\linewidth}
    \vspace{-25pt}
    \centering
    \begin{center}
    \includegraphics[width=0.45\textwidth]{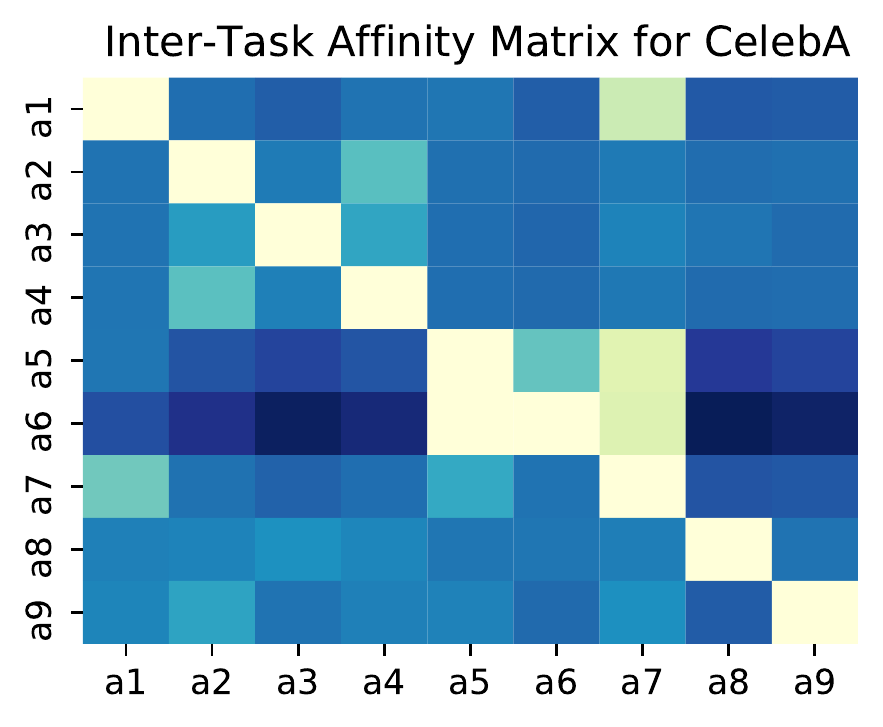}\hfill
     \includegraphics[width=0.45\textwidth]{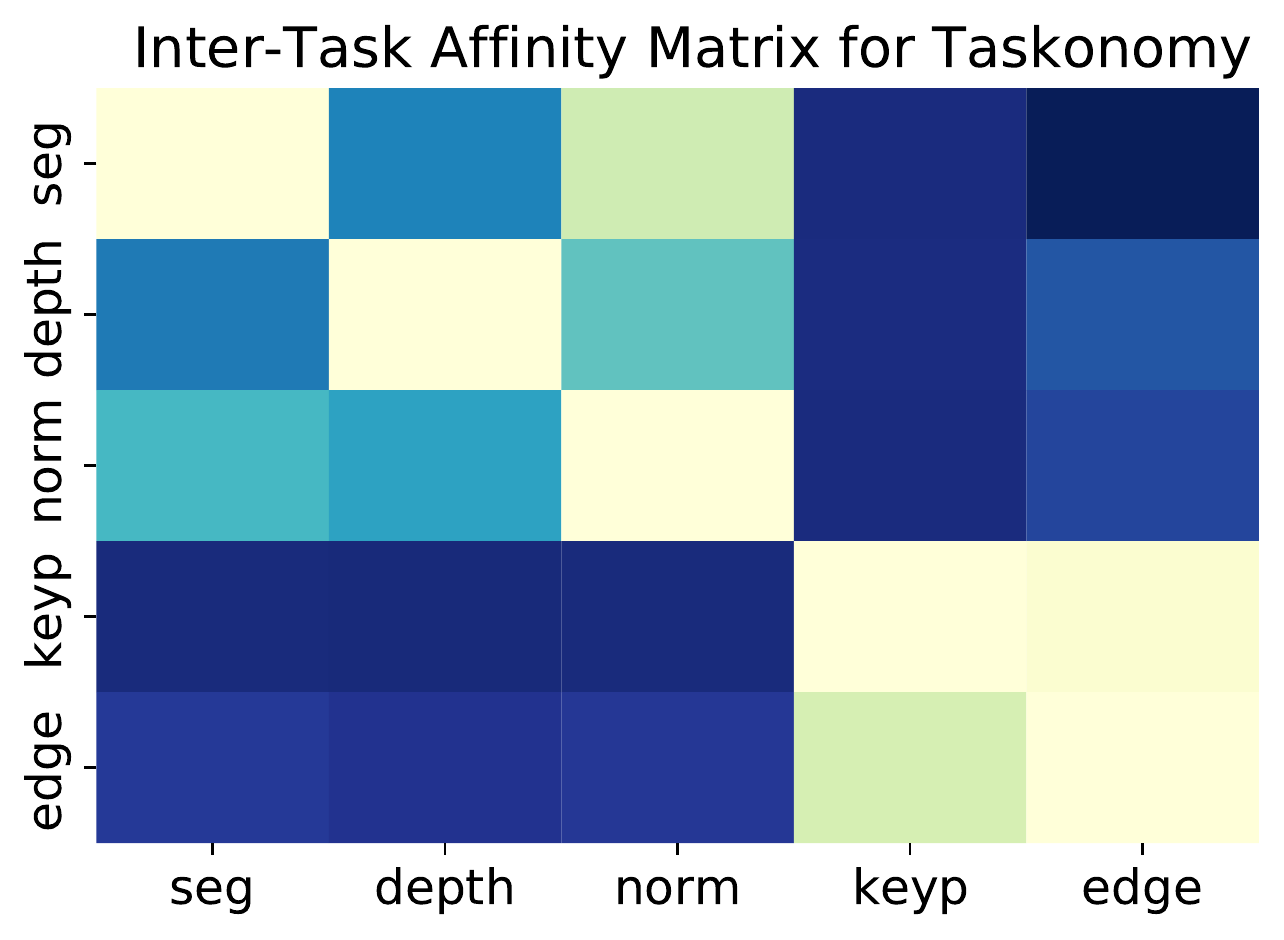}
    \end{center}
    \vspace{-0.5cm}
    \caption{(Top) inter-task affinity on CelebA. (Bottom) inter-task affinity on Taskonomy. Lighter colors signify higher inter-task affinities.}
    \label{fig:task_affinity_mat}
    \vspace{-0.45cm}
\end{wrapfigure}
\textbf{Are inter-task affinities maintained across different sets of tasks?}
\label{sec:affinity_robustness}
For example, if one model trains with tasks \{A, B, C, D\} and another with tasks \{B, C, E, F\}, would the inter-task affinity between B and C be similar in both cases? Our empirical findings indicate inter-task affinity scores between two tasks are largely maintained even in the presence of entirely different task sets on the CelebA dataset. We visualize this effect for two such cases in \Figref{fig:task_sets}. This analysis presented in \Figref{fig:task_sets}(left) indicates that the inter-task affinity from a7 onto a1 is consistent across two different sets of tasks. Similarly, the inter-task affinity from a1 onto a7 is largely preserved as shown in \Figref{fig:task_sets}(right). 

However, there is no guarantee our empirical observations will extend to all cases. The second task set may contain a task that significantly decreases model performance, or even causes divergence. In this event, it is likely that the inter-task affinity exhibited between two tasks in the first set will significantly differ from the second set. 

\textbf{Does our measure of inter-task affinity correlate with optimal task groupings?}
Expanding on our analysis in \Secref{sec:ablations}, we present the pairwise training-level inter-task affinity scores for the CelebA and Taskonomy datasets in \Figref{fig:task_affinity_mat}. Our analysis indicates three ``groups'' naturally appear in CelebA composed of \{a1, a7\}, \{a2,a3,a4\}, and \{a5, a6, a7\}. Meanwhile on Taskonomy, two ``groups'' composed of \{segmentation, depth, normals\} and \{edges, keypoints\} naturally form. The Network Selection Algorithm operates on this information to assign tasks to networks. 

Expanding our analysis into the failure case of \method to select the best auxiliary task for a8 as presented in \Tableref{tab:partner}, \Figref{fig:task_affinity_mat} indicates no other task in our CelebA task set exhibits especially high (or low) inter-task affinity onto a8. We further analyze this characteristic in \Tableref{tab:affinity_diff} which presents the difference in normalized inter-task affinity values for each attribute in our CelebA analysis. Note the affinity onto a8 is significantly less than the affinity onto any other task and 4 times smaller than the next smallest difference. We posit the reason \method fails to select a positive group for a8 is due to the fact that no other task significantly decreases (or increases) the loss of a8 throughout the course of training. Tasks which exhibit similar characteristics are also likely to be difficult cases for \method when finding a task's best training partner. 

\begin{wraptable}{R}{0.36\textwidth}
    \vspace{5pt}
    \centering
    \scriptsize
    \begin{tabular}{c|c}
    \toprule
    Task & max - min \\
    \midrule 
    a1 & 0.35\\
    a2 & 0.39\\
    a3 & 0.16\\
    a4 & 0.31\\
    a5 & 0.47\\
    a6 & 0.31\\
    a7 & 0.43\\
    a8 & \textbf{0.04}\\
    a9 & 0.17\\
    \bottomrule 
    \end{tabular}
    \caption{\footnotesize{Difference in normalized inter-task affinity onto each task. Notice the affinity onto $a8$ is \textbf{significantly} less than the affinity onto any other task.}}
    \vspace{-20pt}
    \label{tab:affinity_diff}
\end{wraptable}

\textbf{Is change in train loss comparable to change in validation loss?}\\
From \Secref{sec:ablations}, we show $\affinity{i}{j}$ computed on the training set is approximately equal to $\affinity{i}{j}$ computed on the validation set when $i \neq j$: 

\begin{align*}
    \E_{\text{\tiny val}}[\E_{\text{\tiny tr}}[\mathcal{L}_j(X^t_{\text{\tiny val}}, \theta^{t+1}_{s|i}, \theta^t_j)]] & \approx \E_{\text{\tiny val}}[\mathcal{L}_j(X^t_{\text{\tiny val}}, \E_{\text{\tiny tr}}[\theta^{t+1}_{s|i}], \theta^t_j)] = \E_{\text{\tiny tr}}[\mathcal{L}_j(X^t_{\text{\tiny tr}}, \E_{\text{\tiny tr}}[\theta^{t+1}_{s|i}], \theta^t_j)]\, .
\end{align*}

A corollary to this result is the above approximation does not hold when $j=i$: comparing task $i$'s capacity to decrease it's own train loss is not comparable with task $j$'s capacity to decrease the train loss of task $i$. As a result, \method acting on the training dataset will never group a task by itself. While we find a single-task group is never optimal in any of our experiments, one can tradeoff a small decrease in efficiency from loading the validation dataset during training with the capacity of \method to select single-task groupings.

\subsection{Experimental Design}
In this section, we aim to accurately and precisely describe our experimental design to facilitate reproducibility. We also release our code at {\color{urlorange} \href{https://github.com/google-research/google-research/tree/master/tag}{github.com/google-research/google-research/tree/master/tag}} to supplement this written explanation.

\subsection{CelebA}
We accessed the CelebA dataset publicly available on TensorFlow datasets under an Apache 2.0 license at {\color{urlorange} \href{https://tensorflow.org/datasets/catalog/celeb_a}{https://tensorflow.org/datasets/catalog/celeb\_a}} and filtered the 40 annotated attributes down to a set of 9 attributes for our analysis. Our experiments were run on a combination of Keras~\citep{chollet2018keras} and TensorFlow~\citep{abadi2016tensorflow}.

The encoder architecture is based loosely on ResNet 18~\citep{resnet} with task-specific decoders being composed of a single projection layer. A coarse architecture search revealed adding additional layers to the encoder and decoder did not meaningfully improve model performance. A learning rate of 0.0005 is used for 100 epochs, with the learning rate being halved every 15 epochs. The learning rate was tuned on the validation split of the CelebA dataset over the set of $\{0.00005, 0.0001, 0.0005, 0.001, 0.005, 0.01\}$. GradNorm~\citep{gradnorm} alpha was determined by searching over the set of \{0.01, 0.1, 0.5, 1.0, 1.5, 2.0, 3.0, 5.0\}, and choosing the alpha with the highest total accuracy on the validation set. 

We train until the validation increases for 10 consecutive epochs, load the parameters from the best validation checkpoint, and evaluate on the test set. We use the splits default to TensorFlow datasets of (162,770, 19,867, 19,962) for (Train, Valid, Test). As each model trains for a varying number of epochs, we report the worst-case runtime in \Figref{fig:task_groupings}(left) which approximates the time required to train each method when the model is trained for the full 100 epochs. This is similar to the method in \citep{standley2019tasks} which trains to completion, loads the best weights, and then evaluates on the test set. We choose 100 epochs as our setup mirrors that of \cite{moo} on CelebA with the exception of adding an early stopping condition.

\subsection{Taskonomy}
Our experiments mirror the settings and hyperparameters of ``Setting 2'' in \citep{standley2019tasks} by directly implementing \method and its approximation in the framework provided by the author's official code release ({\color{urlorange} https://github.com/tstandley/taskgrouping} at hash dc6c89c269021597d222860406fa0fb81b02a231). The encoder is a modified Xception Network and each task-specific decoder consists of four transposed convolutional layers and four convolutional layers. Further information regarding network specifications and training details can be found in \citep{standley2019tasks}. 

To mitigate computational requirements and increase accessibility, we replace the 12 TB full+ Taskonomy split used by \citep{standley2019tasks} with an augmented version of the medium Tasknomy split by filtering out buildings with corrupted images and adding additional buildings to replace the corrupted ones. We download the Taskonomy dataset from the official repository ({\color{urlorange} https://github.com/StanfordVL/taskonomy}) created by \citep{taskonomy} which is released under an MIT license. The final size of our medium+ taskonomy split is approximately 2.4 TB. The list of buildings used in our analysis is encapsulated within our released code. We reuse the implementation of GradNorm in \citep{standley2019tasks} and follow their settings of $\alpha = 1.5$. 

While \citep{standley2019tasks} load the parameters with the lowest validation loss into the model before evaluating on the test set, their early-stopping window size is equal to the total number of epochs. Therefore, each model trains for a full 100 epochs, irrespective of whether the lowest validation loss occurred early or late into training. Accordingly, the runtimes in \Figref{fig:task_groupings} (right) is the time taken by each cloud instance to completely train the model to 100 epochs.

\end{document}